\def\vt{{\bf t}}
\def\cJ{\mathcal{J}}
\newcolumntype{P}[1]{>{\centering\arraybackslash}p{#1}}
\newcolumntype{M}[1]{>{\centering\arraybackslash}m{#1}}
\newcommand\norm[1]{\left\Vert#1\right\Vert}
\def\argmin{\mathop{\rm argmin}}
\def\R{\mathbb{R}}
\def\rank{\mathrm{rank}}
\def\bU{\boldsymbol{U}}
\def\bV{\boldsymbol{V}}
\def\bI{\boldsymbol{I}}
\def\bW{\boldsymbol{W}}
\def\bS{\boldsymbol{S}}
\def\bY{\boldsymbol{Y}}
\def\bX{\boldsymbol{X}}
\def\bE{\boldsymbol{E}}
\def\bM{\boldsymbol{M}}
\newtheorem{theorem}{Theorem}
\newtheorem{lemma}{Lemma}
\begin{document}

\title{Robust PCA Based on Adaptive Weighted Least Squares and Low-Rank Matrix Factorization\thanks{Submitted to the editors \today. This work is supported by the National Natural Science Foundation of China (Grant No. 12361089); the Scientific Research Fund Project of Yunnan Provincial Education Department (Grant No. 2024J0642); the Yunnan Fundamental Research Projects (Grant Nos. 202401AU070104, 202401AU070105); the Scientific Research Fund Project of Yunnan University of Finance and Economics (Grant No. 2024D38);  the Foundation of MOE-LCSM, School of Mathematics and Statistics, Hunan Normal University (Grant No. 202405)}}

\author{Kexin Li\thanks{Kexin Li, School of Statistics and Mathematics, Yunnan University of Finance and Economics, Kunming, Yunnan, China.}, You-wei Wen$^*$\thanks{You-wei Wen, Corresponding author. Key Laboratory of Computing and Stochastic Mathematics (LCSM), School of Mathematics and Statistics, Hunan Normal University, Changsha, Hunan, China. Email: {\tt wenyouwei@gmail.com}},
Xu Xiao\thanks{Xu Xiao, School of Mathematics and Statistics, Guangxi Normal University, Guilin, Guangxi, China}, Mingchao Zhao \thanks{Mingchao Zhao, School of Statistics and Mathematics, Yunnan University of Finance and Economics, Kunming, Yunnan, China.}
}
\maketitle

\begin{abstract} 
Robust Principal Component Analysis (RPCA) is a fundamental technique for decomposing data into low-rank and sparse components, which plays a critical role for applications such as image processing and anomaly detection. Traditional RPCA methods commonly use $\ell_1$ norm regularization to enforce sparsity, but this approach can introduce bias and result in suboptimal estimates, particularly in the presence of significant noise or outliers. Non-convex regularization methods have been proposed to mitigate these challenges, but they tend to be complex to optimize and sensitive to initial conditions, leading to potential instability in solutions.	
To overcome these challenges, in this paper, we propose a novel RPCA model that integrates adaptive weighted least squares (AWLS) and low-rank matrix factorization (LRMF). The model employs a {self-attention-inspired} mechanism in its weight update process, allowing the weight matrix to dynamically adjust and emphasize significant components during each iteration. By employing a weighted F-norm for the sparse component, our method effectively reduces bias while simplifying the computational process compared to traditional $\ell_1$-norm-based methods. We use an alternating minimization algorithm, where each subproblem has an explicit solution, thereby improving computational efficiency. Despite its simplicity, numerical experiments demonstrate that our method outperforms existing non-convex regularization approaches, offering superior performance and stability, as well as enhanced accuracy and robustness in practical applications.
\end{abstract}

\begin{IEEEkeywords}
Robust principal component analysis, low-rank, matrix factorization, sparse, weighted 
\end{IEEEkeywords}

\section{Introduction}
Principal Component Analysis (PCA) is a widely used dimensionality reduction technique that projects high-dimensional data onto a lower-dimensional subspace by maximizing the variance along the principal components \cite{abdi2010principal, jolliffe2016principal}. The core of PCA lies in performing Singular Value Decomposition (SVD) on the data matrix to capture its dominant components, which correspond to the directions of maximum variance. Despite its practicality, PCA assumes that the data is clean and follows a Gaussian distribution, making it highly sensitive to noise and outliers \cite{xu2010robust}. This sensitivity poses a significant challenge in real-world applications, such as image processing and computer vision, where large errors or outliers often contaminate data. 

To address these limitations, Robust  PCA (RPCA) has been proposed as a more flexible and resilient variant of PCA \cite{nie2020truncated,bouwmans2018applications}.  
RPCA decomposes the given data matrix $\bY \in \R^{m\times n}$ into a low-rank matrix $\bX$, representing the underlying data structure, and a sparse matrix $\bS$, which captures the noise or outliers, {\it i.e.}, 
\begin{equation}
	\bY = \bX +\bS. 
\end{equation}
 This approach effectively separates the signal from the noise, enhancing data analysis's robustness in practical applications. RPCA has proven particularly useful in areas such as image processing \cite{gu2017weighted, bouwmans2018applications,wang2021tensor,jiang2018superpca}, background subtraction \cite{liu2021efficient,cao2016total}, and anomaly detection \cite{ruhan2022enhance,yao2022hyperspectral,xu2018joint,xiao2023robust}, where the data is often heavily corrupted or contains sparse, high-magnitude outliers. 
 
 Intuitively, RPCA can be formulated as a problem of minimizing the rank of the low-rank component and the $\ell_0$ norm of the sparse component \cite{candes2011robust}. Mathematically, this translates to an optimization problem where one seeks to find a low-rank matrix that best represents the underlying structure of the data, while simultaneously minimizing the number of non-zero entries in the sparse matrix that captures outliers or noise. However, both the rank function and the $\ell_0$ norm are non-convex, making this optimization problem NP-hard. Solving it directly is computationally intractable, especially for large-scale data. To circumvent this challenge, common approaches replace the rank function with the nuclear norm and the $\ell_0$ norm with the $\ell_1$ norm \cite{recht2010guaranteed,cai2010singular}. The nuclear norm serves as the best convex surrogate for the rank function, promoting low-rank solutions, while the $\ell_1$ norm is a well-known convex relaxation of the $\ell_0$ norm, favouring sparsity. This relaxation transforms the original non-convex problem into a convex optimization problem, which can be solved efficiently by existing convex optimization techniques with well-established convergence such as the accelerated proximal gradient (APG) \cite{toh2010accelerated,li2015accelerated}, the alternating direction method (ALM) \cite{Yuan2013SparseAL},  the inexact augmented Lagrange multiplier (IALM) \cite{lin2011linearized}, and so on.  
 
 Nuclear norm-based methods impose uniform shrinkage on singular values, leading to biased estimates and suboptimal reconstructions. To address this, non-convex low-rank regularization techniques have gained attention\cite{kang2015robust,oh2015partial,zhang2023generalized,zhang2023hyperspectral}. Gu {\it {et al.}}  \cite{gu2017weighted} proposed the Weighted Nuclear Norm Minimization (WNNM) model, which reduces excessive penalization of large singular values by assigning different weights. Xie {\it {et al.}}  \cite{xie2016weighted} introduced the Weighted Schatten-$p$ Norm Minimization (WSNM) method, offering refined control over singular values using the Schatten-$p$ norm. Huang {\it {et al.}}  \cite{huang2023robust}  proposed a truncated $\ell_{1-2}$ norm approach, providing more accurate low-rank structure capture and noise separation. These methods outperform traditional nuclear norm-based approaches in various applications by preserving the data structure more effectively.

However, the main bottleneck of the methods above lies in the need to perform SVD at each iteration. As the dimensionality of the data increases, the computational cost of SVD rises exponentially, making these methods highly inefficient for handling large-scale matrices. To address this issue, researchers have proposed methods based on low-rank matrix decomposition such as go decomposition (GoDec) \cite{zhou2011godec}, semisoft GoDec (SSGoDec) \cite{zhou2013shifted}, low-rank matrix fitting (LMaFit) \cite{shen2014augmented}, and robust matrix factorization by majorization minimization (RMF-MM) \cite{lin2017robust}, etc.   
These methods commonly use the $\ell_1$ norm as a regularization term for enforcing sparsity. While the $\ell_1$ norm is favoured for its convexity, which makes it easier to optimize and performs well in constructing sparse solutions, it tends to excessively penalize large noise or outliers, leading to biased solutions. 
This bias problem becomes particularly pronounced when dealing with real-world data that contains significant outliers or noise. 
To address this issue, some non-convex penalty functions are introduced to the RPCA based on factorization.  
Recently, Wen {\it {et al.}}  employed generalized non-convex penalties for low-rank and sparsity in RPCA demonstrating improved performance under strict low-rank and sparsity conditions \cite{wen2019robust, wen2019nonconvex}.
Quach {\it {et al.}}  \cite{quach2017non} proposed an online robust PCA method using a non-convex $p$-norm,  which enhances performance without significant computational overhead and is effective for real-time tasks like background subtraction.  
Overall, with the introduction and development of non-convex regularization techniques, RPCA has seen significant improvements in handling large-scale data and noisy outliers, effectively mitigating the limitations of traditional $\ell_1$ regularization, and achieving better results in fields such as image processing, video analysis, and anomaly detection. 

Despite the advantages of non-convex regularization methods in reducing over-penalization and bias, they also have some drawbacks. First, non-convex regularization problems are generally challenging to optimize, as they involve non-convex functions, making it easy to get trapped in local optima and increasing the complexity of algorithm design and solution. Additionally, these methods can sometimes be overly sensitive to initial values or yield unstable solutions, particularly when dealing with highly noisy data, potentially preventing the attainment of stable and globally optimal solutions.
Motivated by the aforementioned challenges, we focus on a key issue: instead of relying on the $\ell_1$ norm or its non-convex approximation, why not adopt a simpler quadratic penalty while assigning different weights to individual data elements, thereby mitigating the bias introduced by the $\ell_1$ norm? 
To this end, we introduce a more streamlined model that not only avoids the bias induced by the $\ell_1$ norm but also addresses the complexities and instability associated with non-convex optimization. 

The main contributions of this work are summarized as follows:
\begin{enumerate}
\item We propose an RPCA model based on adaptive weighted least squares (AWLS) and low-rank matrix factorization (LRMF),  which employs a self-attention-inspired weighted Frobenius norm to represent the sparse components. This adaptive weighting strategy effectively reduces bias, simplifies the computation, and enhances robustness by dynamically adjusting penalties for different data elements.
\item The model is solved using an alternating minimization approach, where each subproblem has an explicit solution, significantly improving the computational efficiency. Meanwhile, we analyze the convergence of the proposed algorithm. 
\item Numerical experiments demonstrate that, despite its simplicity, the proposed model outperforms existing non-convex regularization methods. The self-attention-inspired weighting mechanism enhances both accuracy and robustness in the experimental results.
\end{enumerate}

The structure of this paper is organized as follows: In Section \ref{Pre}, we begin by introducing the notation and reviewing related work. In Section \ref{Proposed}, we present the proposed new RPCA model and outline the corresponding numerical algorithm.
In Section\ref{Converg}, the convergence of the proposed algorithm is given.
Section \ref{Numerical} presents the results of the numerical experiments. Finally, a brief conclusion is provided in Section \ref{Con}.

\section{Preliminaries}\label{Pre}
\subsection{Notations}
In this paper, we use the following notations:

\begin{itemize}
\item $\bX \in \mathbb{R}^{m \times n}$ represents a matrix with $m$ rows and $n$ column. $\bX_{ij} $ denotes the $(i,j)$ entry of the matrix $\bX$. 
$rank(\bX)$ denotes the rank of  matrix $\bX$. 
\item  The SVD of matrix $\bX$ is:
$
\bX = \bU \Sigma \bV^T,
$
where $\bU \in \R^{m \times m}$ and $\bV \in \R^{m\times n}$ are orthogonal left singular matrices and right singular matrices respectively; $\Sigma \in \R^{m \times n} $ is the diagonal matrix with $\Sigma_{ii} = \sigma_i(\bX)$, where  $\sigma_i(\bX)$  is the singular value of $\bX$. 
 $\|\bX\|_*$ is the nuclear norm of $\bX$, which is defined as:
\[
\|\bX\|_* = \sum_{i=1}^{\min\{m,n\}} \sigma_i(\bX).
\]  

\item $\|\bX\|_F$ denotes the Frobenius norm of a matrix $\bX$ defined as: 
\[
\|\bX\|_F = \sqrt{\sum_{i=1}^m \sum_{j=1}^n \bX_{ij}^2}.
\]
 \item $\|\bX\|_1$ denotes the $\ell_1$ norm of a matrix $\bX$ defined as: 
\[
\|\bX\|_1 = \sum_{i=1}^m \sum_{j=1}^n |X_{ij}|,
\]
which is the sum of the absolute values of all elements in $\bX$.
\item $\|\bX\|_0$ denotes the $\ell_0$ norm of a matrix $\bX$, defined as:
\[
\|\bX\|_0 = \text{card} \{ (i, j) \mid X_{ij} \neq 0 \},
\]
where ``$\text{card}$'' denotes the cardinality of the set, which counts the number of non-zero entries in $\bX$.
\end{itemize}
\subsection{Related work}
In this subsection, we review several classical models of RPCA that are relevant in this paper. For more information about RPCA, see \cite{bouwmans2016handbook}.  The original RPCA model focuses on minimizing the rank of the matrix $\bX \in \R^{m\times n}$ and the $\ell_0$ norm of the sparse component $\bS\in \R^{m\times n}$ \cite{candes2011robust}:
\begin{equation}\label{Rank}
	\begin{aligned}
		\min_{\bX, \bS} \quad & \rank(\bX) + \lambda \|\bS\|_0 \\
		\text{s.t.} \quad & \bY = \bX + \bS. 
	\end{aligned}
\end{equation}
However, directly minimizing the rank function and the $\ell_0$ norm is computationally challenging. Typically, the nuclear norm and the $\ell_1$ norm are employed as convex relaxations for the rank function and the $\ell_0$ norm, respectively \cite{wright2009robust,wright2010dense}. This leads to the following convex relaxation of \eqref{Rank}: 
\begin{equation} \label{Nuclear}
	\begin{aligned}
		\min_{\bX, \bS} \quad & \|\bX\|_* + \lambda \|\bS\|_1 \\
		\text{s.t.} \quad & \bY = \bX + \bS. 
	\end{aligned}
\end{equation}
The primary limitation of model \eqref{Nuclear} stems from the requirement to compute SVD at each iteration. As data dimensionality grows, the computational burden of SVD increases dramatically, rendering these approaches inefficient for processing large-scale matrices. This challenge has motivated the development of decomposition-based methods to overcome the scalability issues. In other approaches, RPCA is framed as a matrix factorization problem \cite{zhou2013greedy}, where the goal is to approximate $\bY$ by the product of two low-rank matrices $\bU$ and $\bV$ plus a sparse matrix $\bS$, {\it i.e.}, considering the following problem:  
\begin{equation}
	\min_{\bU, \bV, \bS} \quad \|\bY - \bU \bV - \bS\|_F^2 + \lambda \|\bS\|_1.
\end{equation}
To avoid biased estimates introduced by the $\ell_1$ norm, this model has been extended by replacing the $\ell_1$ norm with an alternative norm $\|\cdot\|_\phi$ for the sparse component \cite{wang2023robustHQF}: 
\begin{equation}\label{HQF}
	\min_{\bU, \bV, \bS} \quad \|\bY - \bU \bV - \bS\|_F^2 + \lambda \|\bS\|_\phi, 
\end{equation}
where $\|\bS\|_\phi = \sum_{i=1}^m \sum_{j=1}^n  \phi(\bS_{ij})$, and $\phi$ is a non-convex sparsity-promoting function  such as  $\ell_q$ norm \cite{quach2017non,marjanovic2012l_q}, MCP \cite{Zhang2010NearlyUV}, SCAD \cite{fan2001variable}, and so on. 

\section{Proposed Model and Algorithm} \label{Proposed}
{In sparse matrix $\bS$, the number of non-zero elements is significantly smaller than the total number of elements, and their distribution is irregular. When the numerical values or distribution characteristics of these non-zero elements deviate notably from the norm of other non-zero elements in the matrix, they can be considered outliers. Outlier noise generally stems from observations within a dataset that deviate from the norm, potentially due to factors such as data entry errors, measurement inaccuracies, peculiar data distribution characteristics (like extreme events), or random fluctuations. }

A common statistical approach to handling outliers is the weighted least squares (WLS) method \cite{gao2016penalized}. Specifically, this model minimizes the weighted sum of squares of residuals:
\[
\min_{\bS = \bY - \bX} \sum_{ij} \bW_{ij} \bS_{ij}^2,
\]
where $W_{ij}$ represents the weights. {These weights $W_{ij}$ are either less than one (indicating suspected outliers) or equal to one (indicating non-outliers). Suspected outliers are assigned smaller weights to mitigate their influence. The weight matrix $\bW$ assigns differential weights to various entries of the sparse component $\bS$, enabling the model to down-weight suspected outliers and enhance robustness.
}

{
In practical applications, the data matrix $\bX$ is often not purely low-rank but can be approximated as the sum of a low-rank matrix $\bU\bV$ and a small F-norm matrix $\bE$, {\it i.e.}, $\bX = \bU\bV + \bE$, with $\|\bE\|_F^2$ typically small \cite{roy2024robust}. Therefore, the proposed Robust Principal Component Analysis (RPCA) model with WLS is expressed as:
\begin{equation} \label{RPCA-WLS}
\min_{\bU, \bV, \bS} \quad \|\mathbf{Y} - \bU\bV - \bS\|_F^2 + \lambda \|\bW \circ \bS\|_F^2,
\end{equation}
where $\lambda$ is a regularization parameter, and $ \circ $ denotes the Harmard multiplication. 
}

In an ideal scenario, if we know in advance which positions in the sparse component  $\bS$ are zero, we could simply assign a weight of 1 to these positions and 0 to the others. For instance, when an image is corrupted by impulse noise, we can model it as an RPCA problem, where the weights of the pixels damaged by noise are set to 0, and the weights of the unaffected pixels are set to 1. This approach transforms the denoising problem into a low-rank matrix completion problem.
However, in most cases, we do not have prior knowledge of the sparse component. Therefore, we need to design a mechanism to estimate appropriate weights that can identify which positions in the sparse component are zero. By setting different weights for different pixels, this approach can effectively avoid over-penalizing anomalous points and enhance the robustness of the model.
We first introduce how to solve the quadratic problem \eqref{RPCA-WLS}.

\subsection{Alternating Minimization Method}


In this section, we adopt an alternating minimization approach \cite{wang2008new} to solve for $\bU,\bV,\bS$. 
This method iteratively updates each variable while keeping the others fixed, allowing us to decompose the original complex optimization problem into simpler sub-problems. Given $\bU^k,\bV^k$,  and $\bW$, 
we first consider the $\bS$ sub-problem: 
\begin{equation*}
	\bS^{k+1}=\argmin_{\bS} \|\bY-\bU^{k}\bV^{k}-\bS\|_F^2+\lambda \|\bW \circ \bS\|_F^2.
\end{equation*}  
Since this is a quadratic optimization problem for $\bS$, by taking the derivative of $\bS_{ij}$ and setting it to zero, we can easily derive the update formula for $\bS_{ij}$: 
\begin{equation} \label{Sk}
		\bS^{k+1}_{ij}= (\bY_{ij}-(\bU^{k}\bV^{k})_{ij})./(1+\lambda \bW_{ij}^{2}). 
\end{equation}
Then, given $\bS^{k+1}$, we take the proximal block coordinate descent method \cite{wen2019nonconvex} to update $\bU$ and $\bV$, {\it i.e.}, solving the following problem: 
\begin{equation}\label{UU}
	\begin{aligned}
		\argmin _{\bU} \left\|\bY-\bU \bV^k-\bS^{k+1}\right\|_F^2 +t\left\|\bU-\bU^k\right\|_F^2, 
	\end{aligned}
\end{equation}
\begin{equation}\label{VV}
	\begin{aligned}
		\argmin _{\bV}  \left\|\bX-\bU^{k+1} \bV-\bS^{k+1}\right\|_F^2+ t\left\|\bV-\bV^k\right\|_F^2, 
	\end{aligned}
\end{equation}
where $t>0$ is the proximal parameter. 
Given $\bS^{k+1}, \bV^{k}$, the optimality condition of \eqref{UU} is: 
$$(\bY-\bU\bV^{k}-\bS^{k+1})(\bV^{k})^{T}+t(\bU-\bU^k)=0.$$
Then we have
\begin{small}
\begin{equation}\label{Uk}
\bU^{k+1}= [t\bU^{k}+(\bY-\bS^{k+1})(\bV^{k})^T][\bV^k(\bV^k)^T+tI]^{-1}. 
\end{equation}
\end{small}
 Similarly, the optimality condition  of \eqref{VV} is: 
$$
-(\bU^{k+1})^{T}((\bY-\bU^{k+1}\bV-\bS^{k+1}))+t(\bV-\bV^{k})=0. 
$$
Then we get 
\begin{small}
\begin{equation}\label{Vk}
\begin{aligned}
\bV^{k+1} = [t\bI +(\bU^{k+1})^T\bU^{k+1}]^{-1}[t\bV^{k} +({\bU^{k+1})^T(\bY-\bS^{k+1})}]. 
\end{aligned}
\end{equation}
\end{small}

\subsection{{Self-attention mechanism for weight \texorpdfstring{$\bW$}{W} }}

{
In this model, the weight $\bW$ are crucial for identifying and down-weighting outliers, thereby enhancing the robustness of the RPCA framework. Ideally, if the positions of zeros in $\bS$ are known beforehand, one could assign weights of 1 to these positions and 0 to others. For instance, in image denoising, pixels corrupted by impulse noise can be identified, and their weights set to 0, transforming the denoising problem into a low-rank matrix completion problem. However, such prior knowledge is often unavailable in practice, necessitating a mechanism to estimate appropriate weights.
}












Next, we describe the procedure to update the weight matrix $ \bW $. 
We employ the strategy introduced in \cite{gazzola2020inner}, where a weighted least squares model was proposed for image restoration and reconstruction. In that model, different weights are assigned to other pixels, allowing for adaptive penalization. This effectively avoids excessive penalization of outliers, thereby improving the model’s robustness. Moreover, the model preserves critical edge structures by applying adaptive penalties to image gradients. Inspired by this approach, we apply an adaptive penalty to the sparse components within our RPCA model, enhancing its capacity to handle outliers while maintaining key structural information.

{{This weight update mechanism can be interpreted as a form of self-attention, as it dynamically adjusts the weight matrix $ \bW $ in each iteration based on the current state of the sparse component $ \bS^k $. }
Self-attention is a mechanism widely used in deep learning, especially in models like Transformers, to focus on different parts of the input data and capture relationships between elements \cite{bahdanau2014neural,vaswani2017attention}. 
The following steps illustrate how this self-attention-like mechanism operates through the update formulas. 
Given  $\bS^{k+1}$ and $\bW$, the process to determine the weight matrix $ \bW^{k+1} $ is as follows:\\
\textbf{Step 1: Calculation of the dynamic scaling factor $ t^k $.}
To begin, we compute a dynamic scaling factor $ t^k $ based on the current weights and sparse component, as shown below:
\begin{equation} \label{W1}
    t^{k} = \frac{| \bW^{k} \circ \bS^{k} |}{\| \bW^{k} \circ \bS^{k} \|_\infty},
\end{equation}
where $ | \cdot | $ denotes the absolute value operation, and $ \| \cdot \|_\infty $ represents the maximum value across all elements. This operation is analogous to calculating similarity in self-attention mechanisms, where each element of $ \bS^k $ is scaled relative to its importance, focusing the model on more influential elements.
 \\
\textbf{Step 2: Update of intermediate weight $ \widehat{\bW}^{k+1} $.}
With the dynamic scaling factor $ t^k $ computed, we proceed to update the intermediate weights as follows:
\begin{equation}\label{W2}
\widehat{\bW}^{k+1} = 1-(\vt^k)^p, 
\end{equation}
where $ p > 0 $ is a predefined parameter. This updated step adjusts the weights based on the scaled values of $ \bS^k $, effectively emphasising less penalised elements, thereby avoiding excessive punishment of outliers, akin to self-attention where certain elements receive higher attention scores. \\
\textbf{Step 3: Update of the weight matrix $ \bW^{k+1} $.}
Finally, the updated weight matrix $ \bW^{k+1} $ is obtained by combining the previously computed weights with the existing weight, {\it i.e.}, 
\begin{equation}\label{W3}
	\bW^{k+1} =\widehat{\bW}^{k+1} \circ \bW^{k}.
\end{equation}
This final step resembles the aggregation phase in self-attention, where weights are refined iteratively to enhance the model’s robustness. By assigning adaptive penalties to different elements, this approach effectively prevents the over-penalization of outliers, thus increasing the stability and robustness of the model.
}
The process of solving problem \eqref{RPCA-WLS} is summarized as Algorithm 1.

\begin{algorithm} 
\SetAlgoNoLine
\caption{RPCA based on AWLS and LRMF.}
\label{Alg1}
\KwIn{ $\bY$}
Initialize $\bW^{0},\bU^0, \bV^0, t$\; 
\For {$k=0,1,2,\ldots$, until the stopping criterion is satisfied} 
{
Update $\bW^{k}$ by \eqref{W1}--\eqref{W3}\;
        Calculate $\bS^{k+1}$ by \eqref{Sk}\;
        Calculate $\bU^{k+1}$ by \eqref{Uk}\;
        Calculate $\bV^{k=1}$ by \eqref{Vk}\;
}
\Return $\bU= \bU^{k+1},\bV= \bV^{k+1},\bS= \bS^{k+1}$. 
\end{algorithm}

\section{Convergence Analysis} \label{Converg}

Next, we will analyze the convergence behaviour of the weight matrix. Given the normalization process and the application of the absolute value operation, the entries of the weight vector are constrained within the interval $[0, 1]$, that is, $0 \leq 	\widehat{\bW}^{k}_{ij} \leq 1$. Considering the initial condition $0 \leq [\bW^{0}]_{ij} \leq 1$, and based on the update formula $\bW^{k} = 	\widehat{\bW}^k \circ \bW^{k-1}$, it can be readily verified that as the number of iteration steps increases, the diagonal entries of the weight matrix defined in \eqref{W3} are non-increasing. Specifically, the $i$-th diagonal entry of two consecutive weight matrices satisfies 
\begin{equation}\label{Wii}
[\bW^{k}]_{ii} \leq [\bW^{k-1}]_{ii}.
\end{equation}
Consequently, based on the definition of the weight matrix, we arrive at the following lemma.
\begin{lemma}\label{ConW}
If $\bW^{k}$ is generated by \eqref{W1}-\eqref{W3}, then there exists a diagonal matrix $\bW^{*}$ such that 
$$
\lim\limits_{k \to \infty} \bW^{k} = \bW^{*}. 
$$
\end{lemma}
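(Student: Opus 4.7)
The plan is to reduce the statement to an entry-wise application of the monotone convergence theorem, exploiting the bounded nonnegative nature of the updates. First I would establish that every entry of the auxiliary matrix $t^k$ from \eqref{W1} lies in $[0,1]$: by construction $t^k_{ij} = |\bW^k_{ij}\bS^k_{ij}|/\|\bW^k\circ\bS^k\|_\infty$, which is a ratio of a nonnegative number to the maximum of all such nonnegative numbers, hence $0 \le t^k_{ij} \le 1$. Raising to any power $p>0$ preserves this containment, so $(t^k_{ij})^p \in [0,1]$ and therefore $\widehat{\bW}^{k+1}_{ij} = 1-(t^k_{ij})^p \in [0,1]$ as well.

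Next I would exploit the multiplicative update \eqref{W3}. Assuming inductively that $0 \le \bW^k_{ij} \le 1$ (which holds for $k=0$ by the initialization described in the paragraph preceding the lemma), the product $\bW^{k+1}_{ij} = \widehat{\bW}^{k+1}_{ij}\,\bW^k_{ij}$ is the product of two numbers in $[0,1]$, hence again lies in $[0,1]$. In particular $\bW^{k+1}_{ij} \le \bW^k_{ij}$, which is precisely \eqref{Wii} extended from the diagonal to every entry. Thus for each fixed index pair $(i,j)$, the scalar sequence $\{\bW^k_{ij}\}_{k\ge 0}$ is monotone non-increasing and bounded below by $0$.

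By the monotone convergence theorem, each such sequence has a finite limit, which I would denote $\bW^*_{ij} := \lim_{k\to\infty}\bW^k_{ij} \in [0,\bW^0_{ij}]$. Assembling these entry-wise limits defines a matrix $\bW^*$, and because the ambient space is finite-dimensional, entry-wise convergence is equivalent to convergence in, say, the Frobenius norm, giving $\lim_{k\to\infty}\bW^k = \bW^*$. If one insists on the ``diagonal'' reading of the statement, exactly the same argument applies to the diagonal entries alone.

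The main obstacle is arguably just the first step, namely verifying the bound $0\le t^k_{ij}\le 1$ cleanly in the edge case $\bW^k\circ\bS^k = 0$ (where the normalization in \eqref{W1} is indeterminate); I would handle this by the natural convention $t^k = 0$ whenever the denominator vanishes, which makes $\widehat{\bW}^{k+1}=\mathbf{1}$ and leaves $\bW^{k+1}=\bW^k$, so both the monotonicity and the boundedness arguments remain intact. All remaining steps are routine bookkeeping.
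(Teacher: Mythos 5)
Your proof is correct and follows essentially the same route as the paper: the paper's argument (given in the paragraph preceding the lemma) is exactly that the normalization forces the entries of $\widehat{\bW}^{k}$ into $[0,1]$, so the multiplicative update makes each entry of $\bW^{k}$ monotone non-increasing and bounded below, whence convergence. You are in fact more careful than the paper on two points worth keeping: the explicit handling of the degenerate case $\bW^{k}\circ\bS^{k}=0$ in \eqref{W1}, and the observation that the monotonicity holds entry-wise (the paper's restriction to ``diagonal entries'' and its claim that the limit is a ``diagonal matrix'' are artifacts of loose wording, since $\bW$ enters only through the Hadamard product).
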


Lemma \ref{ConW} asserts that the sequence $\bW^{k}$ converges as the iteration progresses. This result is crucial for establishing the convergence of the proposed algorithm.

Let us first denote 
\begin{equation}\label{cJ}
\cJ(\bU,\bV, \bS;\bW) := \frac{1}{2} \|\bY-\bU \bV-\bS\|_F^2+\lambda\|\bW \circ \bS\|_F^2,  
\end{equation}
{\it c.f.} (\ref{RPCA-WLS}). 
We now analyze the convergence of Algorithm \ref{Alg1}. First, we have the following theorem.
\begin{theorem}\label{ThJ}
The sequence $\{\bU^{k}, \bV^{k}, \bS^{k};\bW^{k}\}$ generated by Algorithm \ref{Alg1},  satisfies the following properties:
\begin{enumerate}
\item The objective function of $\cJ$ is non-increasing.  Specifically, we have 
 \begin{equation}
\begin{aligned}
&\cJ(\bU^{k}, \bV^{k}, \bS^{k};\bW^{k}) - \cJ(\bU^{k+1}, \bV^{k+1}, \bS^{k+1};\bW^{k+1}) \\
&\geq  t\left\|\boldsymbol{U^{k+1}}-\bU^{k}\right\|_F^2 + t\left\|\bV^{k+1}-\bV^{k}\right\|_F^2. 
\end{aligned}
\end{equation}
\item  The sequence $\bU^k, \bV^k$ is convergence, {\it i.e.}, 
$$
\begin{aligned}
\lim\limits_{k\to{\infty}} \|\bU^{k+1}  - \bU^k\|_F =0, \\
 \lim\limits_{k\to{\infty}} \|\bV^{k+1}  - \bV^k\|_F =0. 
\end{aligned}
$$
\end{enumerate}

\end{theorem}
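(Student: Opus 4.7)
The plan is to prove both claims simultaneously by telescoping $\cJ(\bU^{k}, \bV^{k}, \bS^{k};\bW^{k}) - \cJ(\bU^{k+1}, \bV^{k+1}, \bS^{k+1};\bW^{k+1})$ along the four coordinate updates in the order $\bW \to \bS \to \bU \to \bV$, and showing that each of the resulting four differences is non-negative, with the $\bU$- and $\bV$-steps being responsible for the $t\|\bU^{k+1}-\bU^k\|_F^2$ and $t\|\bV^{k+1}-\bV^k\|_F^2$ contributions, respectively.

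First I would handle the $\bW$-step. Since $\vt^k$ defined by \eqref{W1} has every entry in $[0,1]$, formula \eqref{W2} forces $\widehat{\bW}^{k+1}$ to lie entrywise in $[0,1]$, and then \eqref{W3} gives the entrywise bound $0 \le \bW^{k+1}_{ij} \le \bW^k_{ij}$ (this is essentially the content used to prove Lemma~\ref{ConW}). Because $\cJ$ depends on $\bW$ only through $\lambda\|\bW \circ \bS\|_F^2$, the $\bW$-difference reduces to $\lambda\sum_{ij}\bigl((\bW^k_{ij})^2 - (\bW^{k+1}_{ij})^2\bigr)(\bS^k_{ij})^2 \ge 0$. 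For the $\bS$-step, the subproblem is strongly convex quadratic in $\bS$ and $\bS^{k+1}$ in \eqref{Sk} is its exact minimizer, so the descent $\cJ(\bU^k,\bV^k,\bS^k;\bW^{k+1}) - \cJ(\bU^k,\bV^k,\bS^{k+1};\bW^{k+1}) \ge 0$ is immediate. For the $\bU$-step, I would invoke the three-point prox inequality: since $\bU^{k+1}$ minimizes the $2t$-strongly convex function $\|\bY-\bU\bV^k-\bS^{k+1}\|_F^2 + t\|\bU-\bU^k\|_F^2$, setting the test point to $\bU^k$ yields
\[
\|\bY-\bU^k\bV^k-\bS^{k+1}\|_F^2 - \|\bY-\bU^{k+1}\bV^k-\bS^{k+1}\|_F^2 \ge 2t\|\bU^{k+1}-\bU^k\|_F^2,
\]
which after the $\tfrac{1}{2}$ factor in \eqref{cJ} delivers exactly the claimed $t\|\bU^{k+1}-\bU^k\|_F^2$ descent. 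The $\bV$-step is treated symmetrically using \eqref{VV} and \eqref{Vk}.

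Given part (1), part (2) follows by summing the descent inequality from $k=0$ to $N-1$ and using $\cJ \ge 0$ to obtain
\[
t\sum_{k=0}^{N-1}\bigl(\|\bU^{k+1}-\bU^k\|_F^2 + \|\bV^{k+1}-\bV^k\|_F^2\bigr) \le \cJ(\bU^0,\bV^0,\bS^0;\bW^0) < \infty.
\]
Letting $N\to\infty$, the series converges, which forces each summand to vanish, giving the two limits.

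The main obstacle I anticipate is the $\bW$-step: the $\bS$-, $\bU$-, and $\bV$-steps are standard consequences of strong convexity of their respective subproblems, but the $\bW$-update is not defined as the minimizer of any subproblem of $\cJ$, so its contribution to the descent must be established by separately analysing the explicit formulas \eqref{W1}--\eqref{W3}. Once the entrywise monotonicity $\bW^{k+1} \le \bW^k$ is in hand, however, the $\bW$-step contribution is automatically non-negative, and the rest of the argument reduces to routine prox-type estimates.
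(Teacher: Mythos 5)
Your proposal is correct and follows essentially the same route as the paper: the same four-step telescoping $\bW \to \bS \to \bU \to \bV$, with the $\bW$-step handled by the entrywise monotonicity $\bW^{k+1}\le\bW^{k}$, the $\bS$-step by exact minimization, the $\bU$- and $\bV$-steps by proximal descent estimates, and part (2) by summing the telescoped inequality against the lower bound $\cJ\ge 0$. Your use of the strong-convexity (three-point) inequality in the $\bU$/$\bV$ steps is a slightly sharper variant that also reconciles the $\tfrac{1}{2}$ factor in \eqref{cJ} with the unscaled subproblems \eqref{UU}--\eqref{VV}, but the argument is otherwise identical to the paper's.
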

\begin{proof} 
We first show  that when updating $W^{k+1}$ by \eqref{W3},  
   \begin{equation}\label{cJ1}
 \cJ(\bU^{k}, \bV^{k}, \bS^{k};\bW^{k+1}) 
\leq  \cJ(\bU^{k}, \bV^{k}, \bS^{k};\bW^{k})
\end{equation}
holds. According to \eqref{Wii}, for $\bS^{k}$, we have 
$$
\norm{\bW^{k+1}\circ \bS^{k}}_F^2 \leq \norm{\bW^{k} \circ \bS^{k}}_F^2.
$$  
Further, by (\ref{cJ}) we obtain
\begin{equation}
\begin{aligned}
&\cJ(\bU^{k}, \bV^{k}, \bS^{k};\bW^{k+1})-\cJ(\bU^{k}, \bV^{k}, \bS^{k};\bW^{k}) \\ &= \norm{\bW^{k+1}\circ \bS^{k}}_F^2 - \norm{\bW^{k}\circ \bS^{k}}_F^2 \leq 0. 
\end{aligned}
\end{equation}
Second, since 
$$ 
\bS^{k+1} = \argmin_{\bS} \cJ(\bU^{k}, \bV^{k},\bS;\bW^{k+1} ), 
$$
we have
   \begin{equation} \label{cJ2}
    \cJ(\bU^{k}, \bV^{k}, \bS^{k+1};\bW^{k+1}) 
   \leq \cJ(\bU^{k}, \bV^{k}, \bS^{k};\bW^{k+1}). 
  \end{equation}
   Then, considering that
   $$
   \bU^{k+1}=\arg \min _{\bU} \left\|\bY-\bU \bV^k-\bU^{k}\right\|_F^2+t\left\|\bU-\bU^k\right\|_F^2, 
   $$
   it's easy to get 
   \begin{equation}
   \begin{aligned}
   & \left\|\bY-\bU^{k+1} \bV^{k}-\bU^{k+1}\right\|_F^2+t\left\|{\bU^{k+1}}-\bU^{k}\right\|_F^2 \\
   & \leq \left\|\bY-\bU^{k} \bV^{k}-\bU^{k+1}\right\|_F^2 
   \end{aligned}
   \end{equation}
   Add $\|\bW^{k+1} \circ  \bS^{k+1}\|_F^2$ to both sides of the above inequality, one can obtain
      \begin{equation} \label{cJ3}
   \begin{aligned}
   & \cJ(\bU^{k+1}, \bV^{k}, \bS^{k+1};\bW^{k+1}) \\ 
   \leq &\cJ(\bU^{k}, \bV^{k}, \bS^{k+1};\bW^{k+1})- t\left\|{\bU^{k+1}}-\bU^{k}\right\|_F^2. 
   \end{aligned}
   \end{equation}
   Similarly,  when updating $\bV^{k+1}$, we have 
         \begin{equation} \label{cJ4}
   \begin{aligned}
   & \cJ(\bU^{k+1}, \bV^{k+1}, \bS^{k+1};\bW^{k+1}) \\ 
   \leq &\cJ(\bU^{k+1}, \bV^{k}, \bS^{k+1};\bW^{k+1})- t\left\|\bV^{k+1}-\bV^{k}\right\|_F^2. 
   \end{aligned}
   \end{equation}
 By combining \eqref{cJ1}, \eqref{cJ2}, \eqref{cJ3} and \eqref{cJ4}, we establish the following conclusion
 {
 \begin{equation}\label{T1}
 \begin{aligned}
   &\cJ(\bU^{k}, \bV^{k}, \bS^{k};\bW^{k}) - \cJ(\bU^{k+1}, \bV^{k+1}, \bS^{k+1};\bW^{k+1}) \\
   &\geq  t\left\|{\bU^{k+1}}-\bU^{k}\right\|_F^2 + t\left\|\bV^{k+1}-\bV^{k}\right\|_F^2. 
 \end{aligned}
 \end{equation}} 
 We can therefore conclude that the sequence $\cJ(\bU^{k}, \bV^{k}, \bS^{k};\bW^{k})$ is non-increasing and convergent since $\cJ$ is lower bounded.
 Then, summing the inequality \eqref{T1} from $k=0$ to $p-1$, we easily get 
 $$
 \begin{aligned}
&\sum_{k=0}^{p-1}  \cJ(\bU^{k}, \bV^{k}, \bS^{k};\bW^{k}) - \cJ(\bU^{k+1}, \bV^{k+1}, \bS^{k+1};\bW^{k+1})\\
&=\cJ(\bU^{0}, \bV^{0}, \bS^{0};\bW^{0}) -  \cJ(\bU^{p}, \bV^{p}, \bS^{p};\bW^{p}) \\
& \geq \sum_{k=0}^{p-1} \left(t\left\|\bU^{k+1}-\bU^{k}\right\|_F^2+ t\left\|\bV^{k+1}-\bV^{k}\right\|_F^2 \right). 
 \end{aligned}
 $$
Let $p\to \infty$, we have 
$$
 \begin{aligned}
& \sum_{k=0}^{\infty} \left(\left\|\bU^{k+1}-\bU^{k}\right\|_F^2+ \left\|\bV^{k+1}-\bV^{k}\right\|_F^2 \right)\\
& \leq \frac{1}{t} \left( \cJ(\bU^{0}, \bV^{0}, \bS^{0};\bW^{0}) -  \cJ(\bU^{p}, \bV^{p}, \bS^{p};\bW^{p}) \right) \\
& < \infty. 
 \end{aligned}
$$
Therefore, we obtain
$$
\begin{aligned}
&\lim\limits_{k\to{\infty}} \|\bU^{k+1}  - \bU^k\|_F =0,  \\
&\lim\limits_{k\to{\infty}} \|\bV^{k+1}  - \bV^k\|_F =0,  
\end{aligned}
$$
which completes the proof. 
\end{proof}
Then, we can easily arrive at the following conclusion.
\begin{theorem}
The limit point of $(\bU^k,\bV^k)$ is the minimum of the minimization problem \eqref{RPCA-WLS}. 
\end{theorem}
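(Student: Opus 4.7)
The plan is to show that any cluster point $(\bU^*, \bV^*, \bS^*)$ of the iterates produced by Algorithm \ref{Alg1} satisfies the first-order optimality conditions of problem \eqref{RPCA-WLS} with weight $\bW = \bW^*$, where $\bW^*$ is the limit supplied by Lemma \ref{ConW}. Since each of the three subproblems is a convex quadratic in its own block, satisfaction of these stationary conditions is equivalent to $(\bU^*, \bV^*, \bS^*)$ being a block-coordinate minimizer of $\cJ(\cdot,\cdot,\cdot;\bW^*)$, which is the meaningful notion of ``minimum'' available for the non-convex joint problem \eqref{RPCA-WLS}.

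The first step is to establish boundedness of $\{\bU^k,\bV^k,\bS^k\}$ so that a convergent subsequence $(\bU^{k_j}, \bV^{k_j}, \bS^{k_j}) \to (\bU^*, \bV^*, \bS^*)$ exists by Bolzano--Weierstrass. Boundedness of $\bU^k\bV^k$ and of $\bW^k \circ \bS^k$ follows from the monotone decrease of $\cJ$ established in Theorem \ref{ThJ}; boundedness of the individual factors $\bU^k,\bV^k$ can be ensured by a mild normalization removing the scaling ambiguity $\bU \mapsto c\bU,\, \bV \mapsto \bV/c$, after which boundedness of $\bS^k$ is immediate from the closed-form update \eqref{Sk}. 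Combining these bounds with $\|\bU^{k+1}-\bU^k\|_F \to 0$, $\|\bV^{k+1}-\bV^k\|_F \to 0$ from Theorem \ref{ThJ} and $\bW^{k+1}-\bW^k \to 0$ from Lemma \ref{ConW}, the identity
\[
\bU^k\bV^k - \bU^{k-1}\bV^{k-1} = (\bU^k-\bU^{k-1})\bV^k + \bU^{k-1}(\bV^k-\bV^{k-1})
\]
together with \eqref{Sk} gives $\|\bS^{k+1}-\bS^k\|_F \to 0$. Consequently, the shifted subsequence $(\bU^{k_j+1},\bV^{k_j+1},\bS^{k_j+1},\bW^{k_j+1})$ shares the same limit $(\bU^*,\bV^*,\bS^*,\bW^*)$.

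The second step is to pass to the limit along $k_j$ in each subproblem's optimality condition. The $\bS$-step relation $(1+\lambda(\bW^{k_j+1})^2) \circ \bS^{k_j+1} = \bY - \bU^{k_j}\bV^{k_j}$ yields in the limit $(1+\lambda(\bW^*)^2)\circ \bS^* = \bY - \bU^*\bV^*$. The $\bU$-step optimality condition $(\bU^{k_j+1}\bV^{k_j}+\bS^{k_j+1}-\bY)(\bV^{k_j})^T = -t(\bU^{k_j+1}-\bU^{k_j})$ has vanishing right-hand side by Theorem \ref{ThJ}, so $(\bU^*\bV^*+\bS^*-\bY)(\bV^*)^T = 0$. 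An analogous passage for the $\bV$-step yields $(\bU^*)^T(\bU^*\bV^*+\bS^*-\bY) = 0$. These three identities are precisely the first-order optimality conditions of $\min_{\bU,\bV,\bS} \cJ(\bU,\bV,\bS;\bW^*)$, so $(\bU^*,\bV^*,\bS^*)$ is a stationary point, which is what the statement should be understood to mean.

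The main obstacle is the non-convexity induced by the bilinear coupling $\bU\bV$: the conclusion cannot be upgraded to a global minimizer, and the boundedness argument must contend with the scaling symmetry mentioned above (typically handled by a normalization or by assuming a uniform lower bound on the singular values of the factors). A secondary subtlety is that Theorem \ref{ThJ} provides only asymptotic regularity of $\bU^k$ and $\bV^k$ rather than convergence of the full sequences, which forces the argument to proceed through subsequences; upgrading this to convergence of the entire sequence would require an additional tool such as the Kurdyka--\L{}ojasiewicz property of $\cJ$.
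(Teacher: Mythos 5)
Your proposal follows essentially the same route as the paper: take a convergent subsequence, write down the first-order optimality conditions of the $\bU$-, $\bV$- (and in your case $\bS$-) subproblems, and let the proximal terms $t(\bU^{k}-\bU^{k-1})$, $t(\bV^{k}-\bV^{k-1})$ vanish using Theorem \ref{ThJ} and Lemma \ref{ConW} to conclude stationarity at the limit. You are in fact more careful than the paper on two points it glosses over --- the paper simply \emph{assumes} a bounded subsequence exists rather than arguing it (you flag the $\bU\mapsto c\bU$, $\bV\mapsto\bV/c$ scaling ambiguity), and you correctly downgrade the conclusion from ``minimum'' to ``stationary point,'' which is all the argument actually delivers for this non-convex bilinear problem.
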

\begin{proof}
It's obvious that 
$$
\begin{aligned}
\partial_{\bU} \cJ(\bU, \bV, \bS;\bW) =(\bY - \bU\bV - \bS)\bV^T, \\
\partial_{\bV} \cJ(\bU, \bV, \bS;\bW) =\bU^T(\bY - \bU\bV - \bS). 
\end{aligned}
$$ 
Now, we prove that
\begin{equation} \label{Opt}
\begin{aligned}
	\partial_{\bU} \cJ(\bU^*, \bV^*, \bS^k;\bW^k) =0, \\
	\partial_{\bV} \cJ(\bU^*, \bV^*, \bS^k;\bW^k)=0. 
\end{aligned}
\end{equation}
is true.
Considering the $\bU$, $\bV$ subproblems \eqref{UU} and \eqref{VV}, we have
$$
\partial_{\bU} \cJ(\bU^{k}, \bV^{k-1}, \bS^{k};\bW^{k})+ t (\bU^{k} -\bU^{k-1}) =0,
$$
$$
\partial_{\bV} \cJ(\bU^{k}, \bV^{k}, \bS^{k};\bW^{k})+t (\bV^{k} -\bV^{k-1})=0. 
$$
Further, we have
\begin{equation}\label{ParU}
	\begin{aligned}
	& \partial_{\bU}   \cJ(\bU^{k}, \bV^{k}, \bS^{k};\bW^{k}) \\
	& = \partial_{\bU} \cJ(\bU^{k}, \bV^{k}, \bS^{k};\bW^{k}) - \partial_{\bU} \cJ(\bU^{k}, \bV^{k-1}, \\ &\qquad  \bS^{k};\bW^{k})
	-  t (\bU^{k} -\bU^{k-1}) \\
	& = X(\bV^{k}-\bV^{k-1})^T -\bU^k(\bV^{k}-\bV^{k-1})(\bV^{k}\\
	& \quad +\bV^{k-1})^T-\bS^k(\bV^{k}-\bV^{k-1})^T 	-  t (\bU^{k} -\bU^{k-1}) , 
	\end{aligned}
\end{equation}
\begin{equation}\label{ParV}
	\partial_{\bV} \cJ(\bU^{k}, \bV^{k}, \bS^{k};\bW^{k})=- t (\bV^{k} -\bV^{k-1}).
\end{equation}
Suppose $\{\bU^{k_j}\}$ and $\{\bV^{k_j}\}$ is the bounded subsequence of $\{\bU^{k}\}$ and $\{\bV^{k}\}$ , respectively, which means  
$$
\lim\limits_{k_j\to \infty} \bU^{k_j} = \bU^*,  \lim\limits_{k_j\to \infty} \bV^{k_j} = \bV^* . 
$$
Obviously, $\bU^{k_j}$  and $\bV^{k_j}$ are  also satisfy \eqref{ParU} and \eqref{ParV}. When $k_j\to \infty$, then \eqref{Opt} holds. 
Therefore, we can conclude that $(\bU^*, \bV^*)$ is the minimum of the  problem \eqref{RPCA-WLS}. 

\end{proof}

\begin{figure}
	\centering \hspace{-0.65cm} 
	\subfloat[]{ 
		\includegraphics[width=0.38\linewidth]{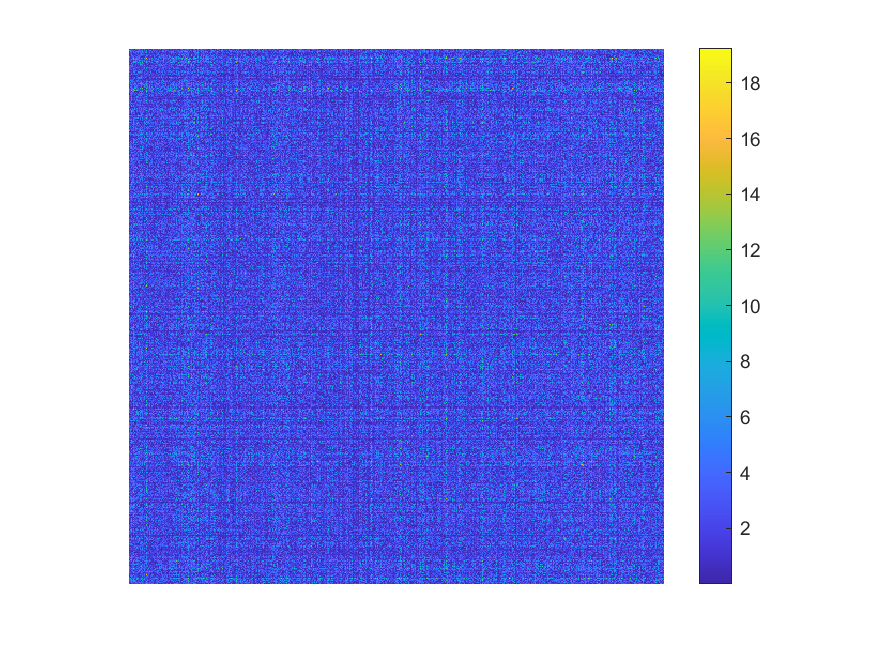} \label{F_1} \hspace{-0.78cm}  } 
	\subfloat[]{ 
		\includegraphics[width=0.38\linewidth]{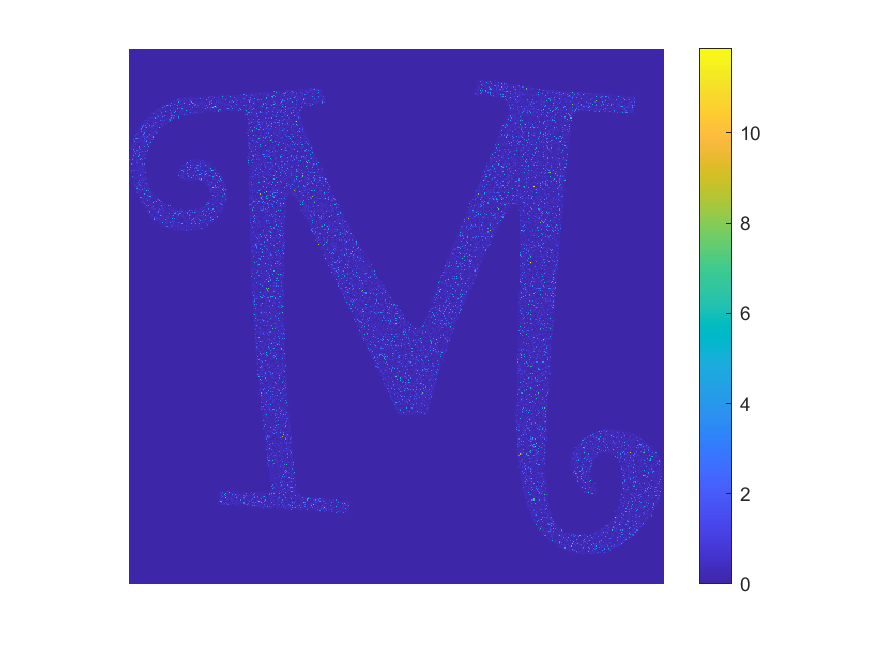} \label{F_2} \hspace{-0.78cm} }
	\subfloat[]{ 
		\includegraphics[width=0.38\linewidth]{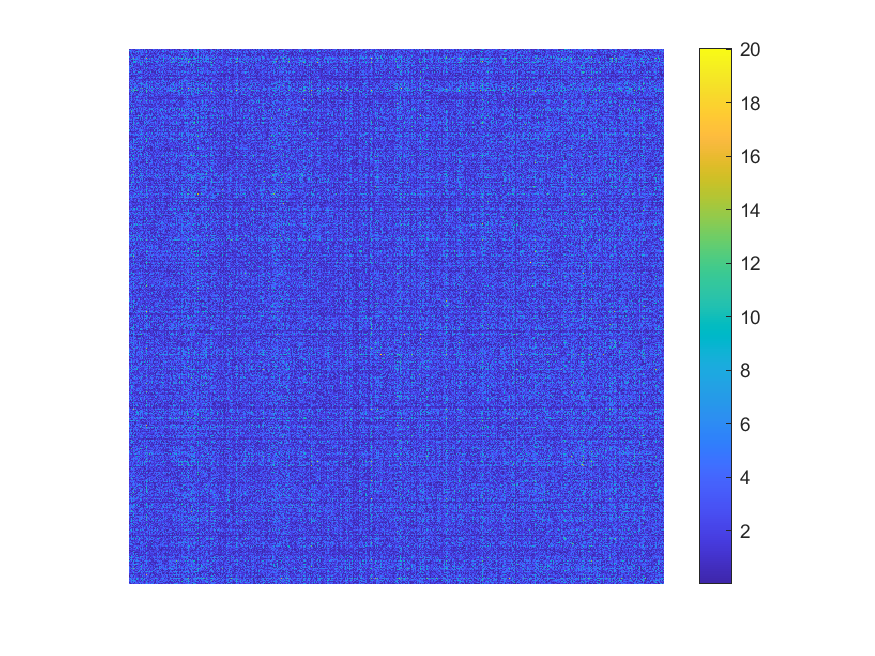} \label{F_3} }
	
	\hspace{-0.65cm} 
	\subfloat[]{ 
		\includegraphics[width=0.38\linewidth]{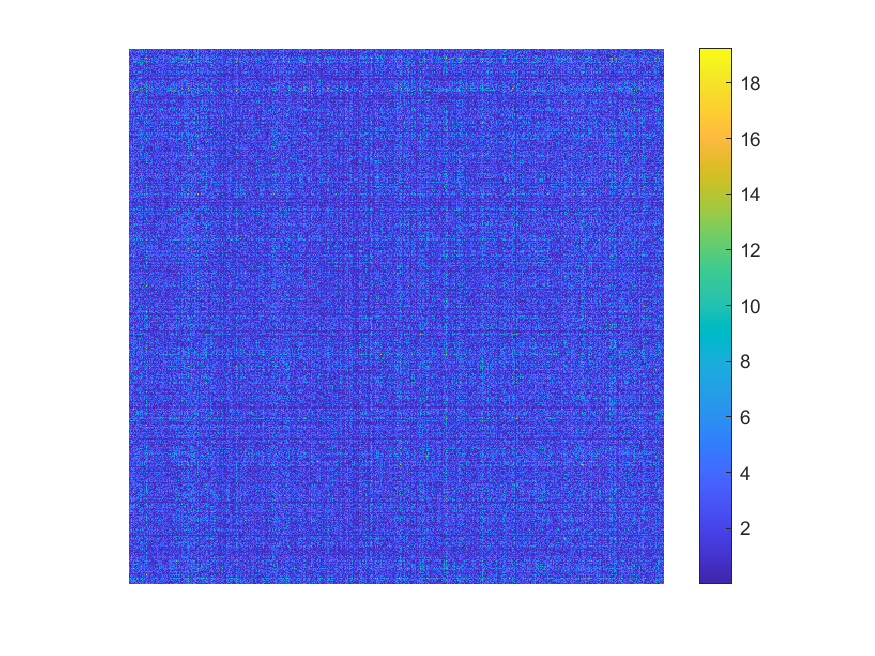} \label{F_4} \hspace{-0.78cm} } 
	\subfloat[]{ 
		\includegraphics[width=0.38\linewidth]{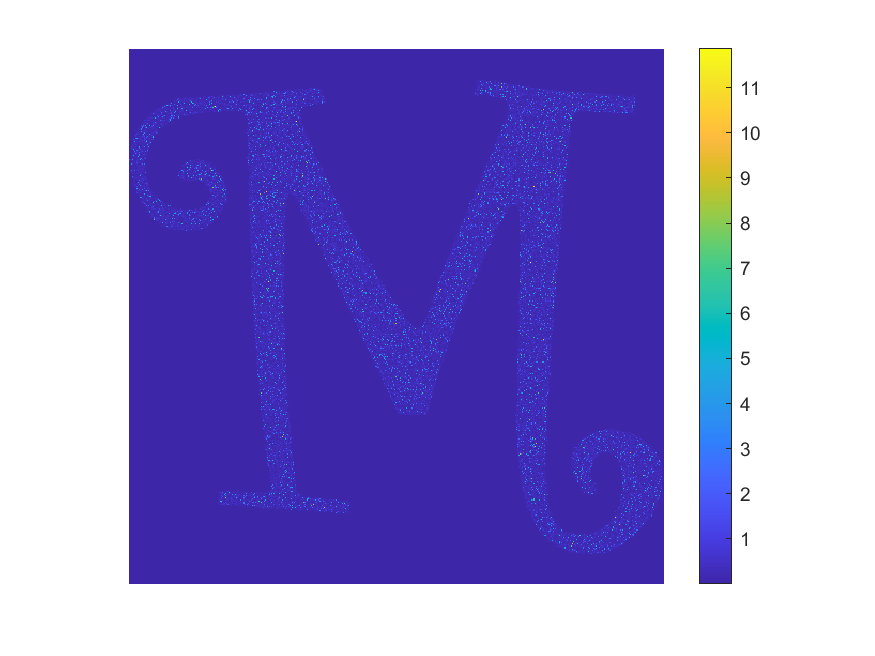} \label{F_5} \hspace{-0.78cm} }
	\subfloat[]{ 
		\includegraphics[width=0.38\linewidth]{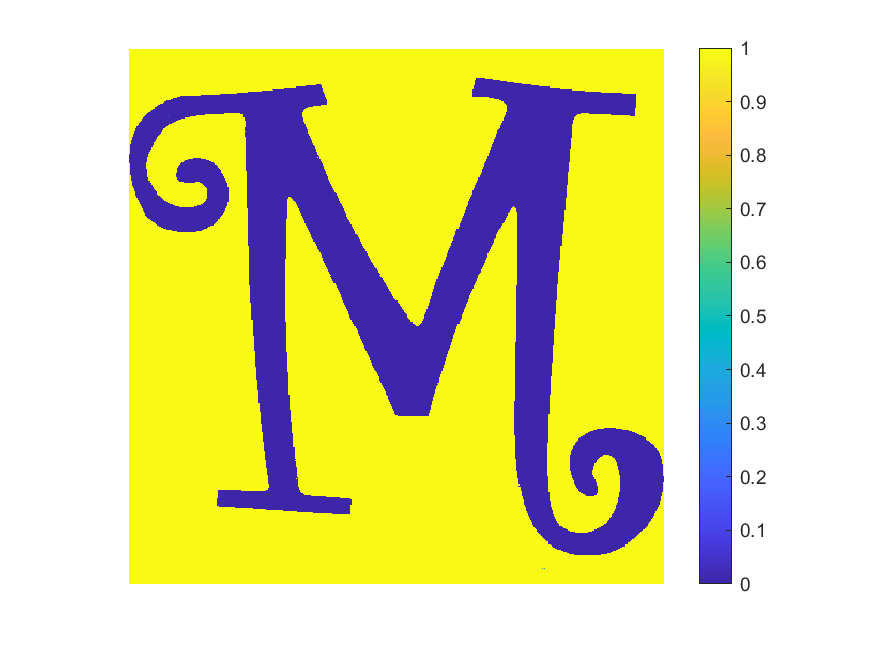} \label{F_6}	}
	\caption{Results of the proposed method tested on synthetic data. Rank is 10,  sparsity is  $26.4\%$, the RMSE of the low-rank matrix is 7.2e-09, and the RMSE  of the sparse matrix is 1.1e-08. (a) Low-rank component $\bX_{\text{true}}$;  (b) Sparse component $\bS_{\text{true}}$; (c) $\bX_{\text{true}}+ \bS_{\text{true}}$; (d) Separated $\hat{\bX}$; (e) Separated $\hat{\bS}$; (f) Estimated $\bW_k$.   }
	\label{WW}
\end{figure}

\begin{figure}
	\centering  \hspace{-0.65cm} 
	\includegraphics[width=0.38\linewidth]{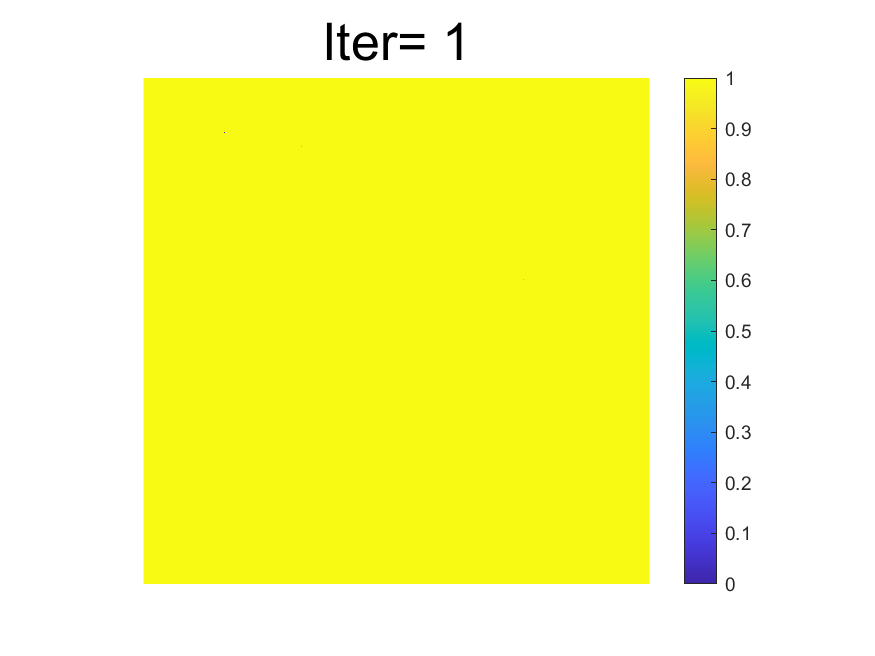}  \hspace{-0.68cm} 
	\includegraphics[width=0.38\linewidth]{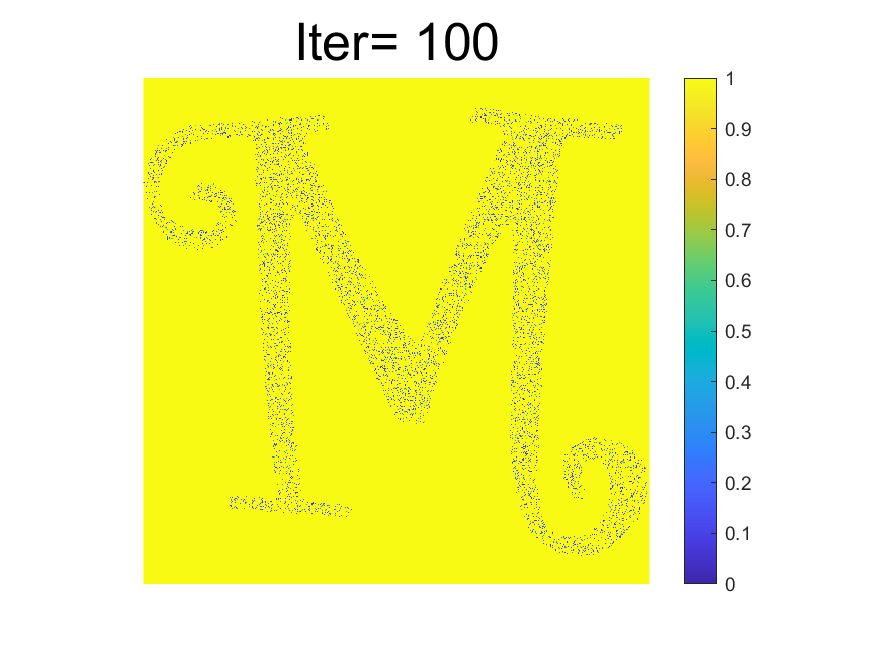} \hspace{-0.68cm} 
        \includegraphics[width=0.38\linewidth]{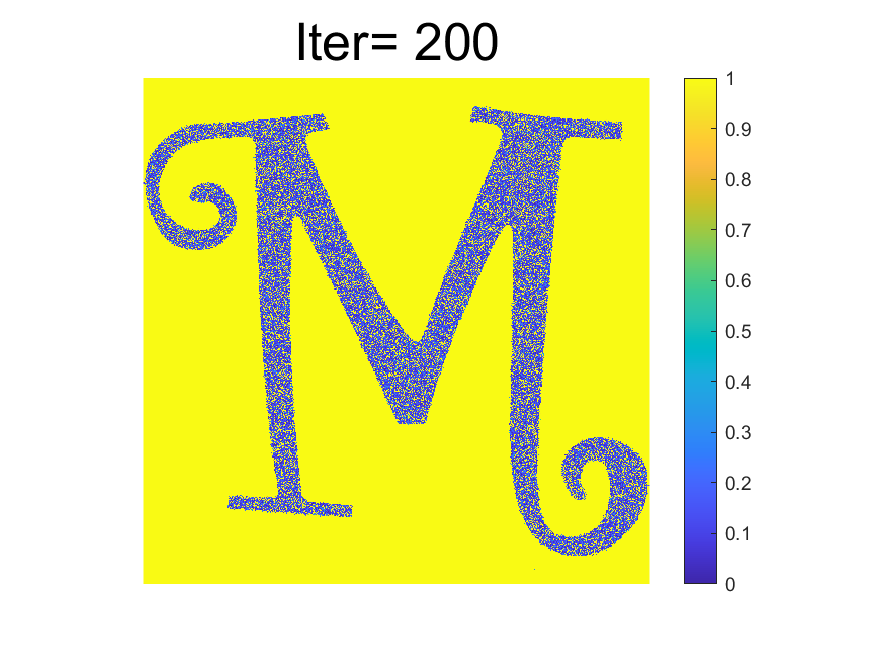} \hspace{-0.68cm} 

        \hspace{-0.65cm} 
        \includegraphics[width=0.38\linewidth]{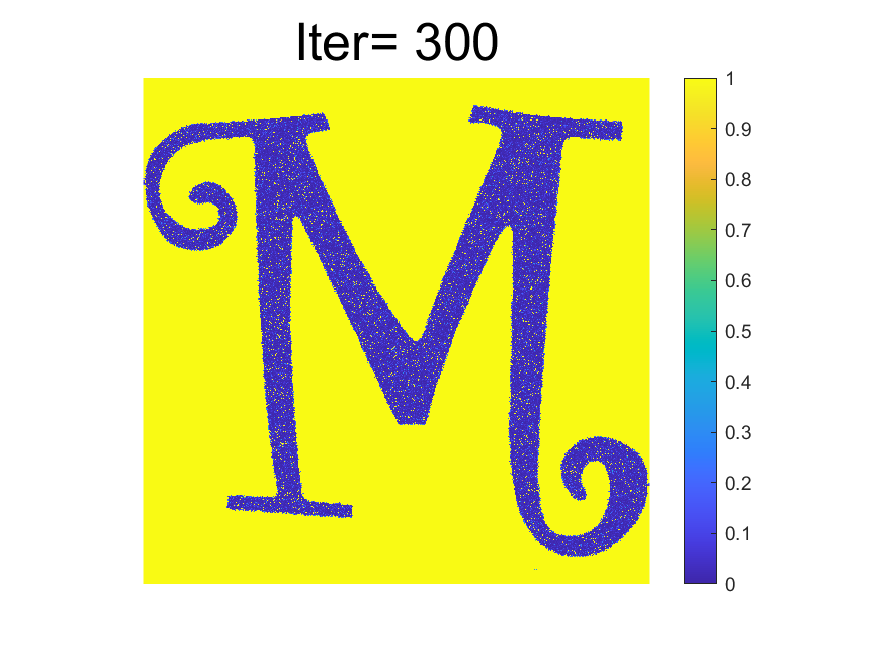}  \hspace{-0.68cm} 
	\includegraphics[width=0.38\linewidth]{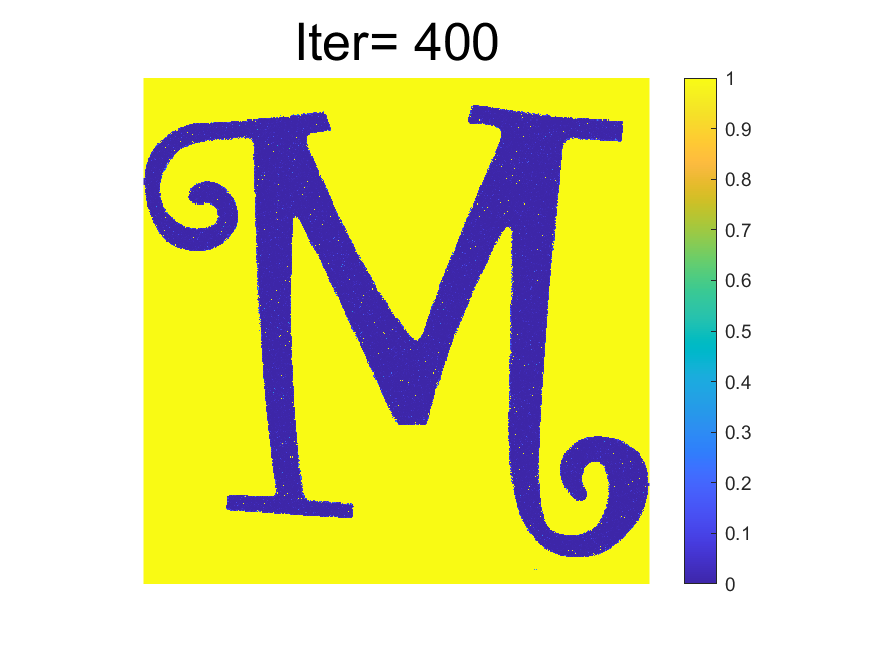} \hspace{-0.68cm} 
        \includegraphics[width=0.38\linewidth]{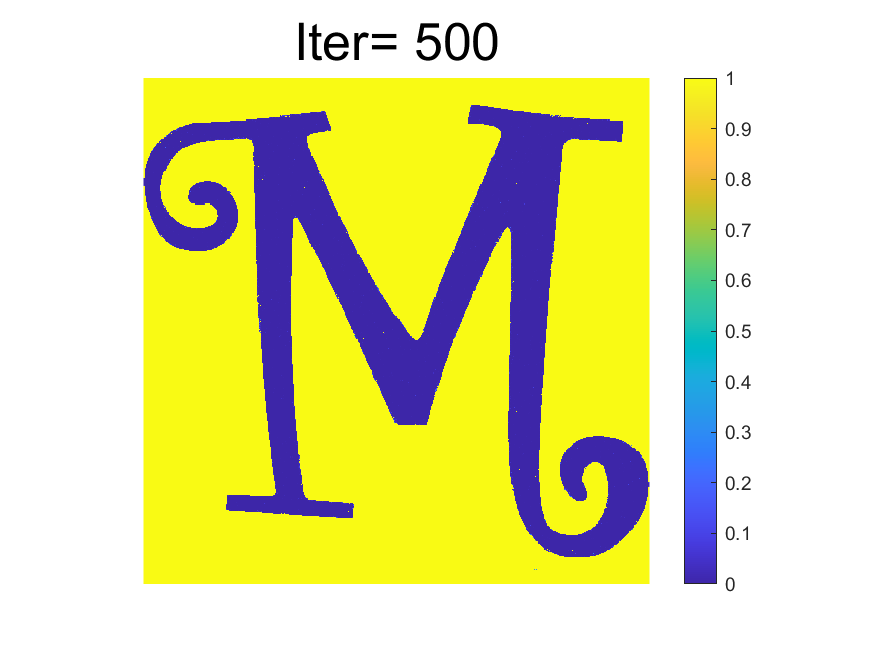}  \hspace{-1cm}   
    \caption{Evolution of the Weight Matrix During Iterations in the proposed model.}
    \label{Wkk}
\end{figure}

\renewcommand\arraystretch{1.3}
\begin{table*}[!ht]
    \centering
    \caption{The root mean square error (RMSE) of the low-rank matrix  $\bX$ separated using different methods at a sparsity level of $10\%$ under varying SNRs. }
    \scalebox{0.75}{
        \begin{tabular}{|M{2cm}|M{1.6cm}|M{1.6cm}|M{1.6cm}|M{1.6cm}|M{1.6cm}|M{1.6cm}|M{1.6cm}|M{1.6cm}|}
            \hline
            \multicolumn{9}{|c|}{$m=n = 500$} \\
            \hline
            \multicolumn{1}{|c|}{SNR } & \textbf{GoDec+} & \textbf{WNNM} & \textbf{NC} & \textbf{OBC} & \textbf{HQF} & \textbf{HOW}& {\textbf{W-L0}} & {\textbf{W-L2}} \\ \hline
            \multicolumn{1}{|c|}{1} & 8.44e-04 & 5.91e-08 & 4.70e-05 & 1.36e-01 & 4.17e-04 & 2.77e-09 &4.94e-09  & 6.93e-11 \\
            \multicolumn{1}{|c|}{3} & 9.50e-04 & 7.19e-08 & 5.02e-05 & 1.13e-01 & 3.18e-04 & 2.51e-09 &  1.87e-09 & 8.27e-11 \\
            \multicolumn{1}{|c|}{6} & 1.15e-03 & 7.28e-08 & 1.01e-04 & 9.63e-02 & 2.32e-04 & 3.10e-08 &  5.31e-09  & 6.92e-11 \\
            \multicolumn{1}{|c|}{9} & 1.31e-03 & 5.51e-08 & 2.37e-04 & 8.19e-02 & 1.49e-04 & 2.42e-08 & 1.52e-09   & 6.49e-11 \\
            \multicolumn{1}{|c|}{12} & 1.54e-03 & 7.56e-08 & 9.06e-04 & 8.06e-02 & 1.23e-04 & 9.80e-09 & 1.89e-09   & 6.70e-11 \\
            \multicolumn{1}{|c|}{15} & 1.82e-03 & 4.21e-08 & 4.41e-03 & 7.74e-02 & 7.71e-05 & 1.54e-09 &  2.57e-09  & 8.01e-11 \\
            \hline
            Average Time (s) & 0.05  & 4.03  & 34.03  & 0.43  & 0.14  & 0.64     &  0.24 & 0.85  \bigstrut\\
            \hline
            \multicolumn{9}{|c|}{$m=n = 1000$} \\
            \hline
            \multicolumn{1}{|c|}{SNR } & \textbf{GoDec+} & \textbf{WNNM} & \textbf{NC} & \textbf{OBC} & \textbf{HQF} & \textbf{HOW} & {\textbf{W-L0}}  & {\textbf{W-L2}} \bigstrut\\
            \hline
            \multicolumn{1}{|c|}{1} & 5.95e-04 & 4.65e-08 & 1.34e-05 & 1.52e-01 & 2.94e-05 & 1.25e-09 &  1.75e-10 &7.12e-11 \\
            \multicolumn{1}{|c|}{3} & 6.50e-04 & 4.39e-08 & 2.08e-05 & 1.50e-01 & 3.29e-05 & 9.09e-10 &  2.18e-10  & 7.55e-11 \\
            \multicolumn{1}{|c|}{6} & 7.82e-04 & 4.11e-08 & 4.26e-05 & 1.41e-01 & 2.63e-05 & 5.17e-10 & 3.73e-10  &8.73e-11 \\
            \multicolumn{1}{|c|}{9} & 9.11e-04 & 4.43e-08 & 1.11e-04 & 1.42e-01 & 1.65e-05 & 4.54e-10 &  1.14e-10 &6.57e-11 \\
            \multicolumn{1}{|c|}{12} & 1.07e-03 & 4.56e-08 & 4.70e-04 & 1.36e-01 & 1.23e-05 & 1.21e-10 &   7.71e-10 &7.73e-11 \\
            \multicolumn{1}{|c|}{15} & 1.23e-03 & 4.42e-08 & 2.40e-03 & 1.36e-01 & 9.96e-06 & 2.88e-09 &  2.30e-10 &6.11e-11 \\
            \hline
            Average Time (s) & 0.10  & 17.47  & 159.53  & 1.16  & 0.61  & 2.36  &  0.78 & 3.44  \bigstrut\\
            \hline
        \end{tabular}}%
    \label{Table1}%
\end{table*}%

\begin{table*}[!ht]
    \centering
    \caption{The root mean square error (RMSE) of the low-rank matrix  $\bX$ separated using different methods at a sparsity level of $20\%$ under varying SNRs.}
    \scalebox{0.75}{
        \begin{tabular}{|M{2cm}|M{1.6cm}|M{1.6cm}|M{1.6cm}|M{1.6cm}|M{1.6cm}|M{1.6cm}|M{1.6cm}|M{1.6cm}|}
            \hline
            \multicolumn{9}{|c|}{$m=n = 500$} \\
            \hline
            \multicolumn{1}{|c|}{SNR } & \textbf{GoDec+} & \textbf{WNNM} & \textbf{NC} & \textbf{OBC} & \textbf{HQF} & \textbf{HOW} & {\textbf{W-L0}} &{\textbf{W-L2}} \\ \hline
            \multicolumn{1}{|c|}{1} & 4.64e-03 & 6.44e-08 & 2.10e-04 & 1.83e-01 & 4.53e-05 & 9.05e-11 & 9.53e-11 &8.59e-11 \\
            \multicolumn{1}{|c|}{3} & 8.07e-03 & 8.29e-08 & 4.96e-04 & 1.41e-01 & 4.19e-05 & 1.17e-10 & 5.10e-10 &6.97e-11 \\
            \multicolumn{1}{|c|}{6} & 1.83e-03 & 8.52e-08 & 1.99e-03 & 1.15e-01 & 6.68e-05 & 6.05e-09 & 3.50e-9 &7.06e-11 \\
            \multicolumn{1}{|c|}{9} & 2.15e-03 & 7.28e-08 & 1.22e-02 & 9.49e-02 & 1.31e-05 & 6.02e-11 & 5.36e-11 &7.31e-11 \\
            \multicolumn{1}{|c|}{12} & 2.52e-03 & 8.34e-08 & 2.79e-02 & 8.30e-02 & 1.93e-05 & 4.46e-08 & 1.31e-07 & 8.74e-11 \\
            \multicolumn{1}{|c|}{15} & 2.86e-03 & 9.16e-08 & 3.70e-02 & 8.07e-02 & 8.31e-06 & 7.95e-10 & 7.19e-10 &7.50e-11 \\
            \hline
            Average Time (s) & 0.07  & 4.54  & 40.56  & 0.42  & 0.27  & 0.87  &0.39 & 0.93 \\
            \hline
            \multicolumn{9}{|c|}{$m=n = 1000$} \\
            \hline
            \multicolumn{1}{|c|}{SNR } & \textbf{GoDec+} & \textbf{WNNM} & \textbf{NC} & \textbf{OBC} & \textbf{HQF} & \textbf{HOW} & {\textbf{W-L0}}& {\textbf{W-L2}} \\ \hline
            \multicolumn{1}{|c|}{1} & 9.48e-04 & 5.54e-08 & 7.89e-05 & 1.72e-01 & 1.42e-05 & 4.37e-11 & 5.51e-11 & 9.28e-11 \\
            \multicolumn{1}{|c|}{3} & 1.05e-03 & 6.32e-08 & 1.65e-04 & 1.61e-01 & 7.87e-06 & 3.32e-10 & 4.89e-11 &8.92e-11 \\
            \multicolumn{1}{|c|}{6} & 1.24e-03 & 5.44e-08 & 9.80e-04 & 1.50e-01 & 4.55e-06 & 1.80e-11 & 4.03e-11 & 9.48e-11 \\
            \multicolumn{1}{|c|}{9} & 1.47e-03 & 5.05e-08 & 8.08e-03 & 1.42e-01 & 1.07e-05 & 7.36e-11 & 1.01e-10 & 9.15e-11 \\
            \multicolumn{1}{|c|}{12} & 1.72e-03 & 5.32e-08 & 2.38e-02 & 1.41e-01 & 3.83e-06 & 8.52e-10 & 4.42e-09 & 6.95e-11 \\
            \multicolumn{1}{|c|}{15} & 1.97e-03 & 5.41e-08 & 3.50e-02 & 1.40e-01 & 2.72e-06 & 6.04e-10 & 3.47e-09 &7.49e-11 \\
            \hline
            Average Time (s) & 0.15  & 18.32  & 207.52  & 1.16  & 0.97  & 2.67  & 1.18 & 3.67  \bigstrut\\
            \hline
        \end{tabular}}%
    \label{Table2}%
\end{table*}%

\section{Numerical experiments} \label{Numerical}
\subsection{The weight  \texorpdfstring{$\bW$}{W}. }
In this subsection, we will conduct a simulation experiment to investigate the role of the weight matrix in our proposed RPCA model \eqref{RPCA-WLS}.
First, a low-rank matrix $\bX$ is generated using  $\bX = \bU\bV$, where $\bU\in\mathbb{R}^{ 500 \times 10}$ and $\bV \in \mathbb{R}^{10 \times 500}$ are randomly generated matrices that follow a normal distribution. Next, the Gaussian noise with a SNR \footnote{The SNR is defined as $ \text{SNR} = \log_{10} \left(\frac{\|\bX\|_F^2}{\|\bM\|_F^2}\right),$ where $\bM$ is the Gaussian white noise.} 
of 10 is generated, and this noise is multiplied by a sparse mask to obtain $\bS$ (see Figure \ref{F_2}), where the sparsity of $\bS$ is 26.4\%.  Finally, the sparse noise is added to the low-rank matrix, resulting in a low-rank matrix $\bY$ with sparse Gaussian noise (see Figure \ref{F_3}). 
Subsequently, we consider the low-rank sparse decomposition of the observed matrix $ \bY $ to recover the original low-rank matrix $ \bX $ and the sparse component $ \bS $ from the observations. We use the root mean square error (RMSE) to quantitatively evaluate the separation performance, which is defined as
\[
\text{RMSE}(\bX) = \sqrt{\frac{1}{mn} \sum_{i=1,j=1}^{m,n} \left( \bX_{\text{true},i,j} - {\bX}_{i,j} \right)^2}. 
\]
The experimental results illustrated in Figure \ref{WW} provide a detailed insight into the effectiveness of the proposed low-rank and sparse decomposition method applied to the observation matrix $\bY$.  
The recovery of the low-rank component $\bX_{\text{true}}$, as shown in Figure \ref{F_4}, closely matches the true low-rank component  $\bX$ and achieves an RMSE of $7.2 \times 10^{-9}$,  which demonstrates that the proposed method maintains high precision when recovering the underlying low-rank structure.
In addition to the low-rank matrix, the sparse component $\bS_{\text{true}}$ is accurately separated from the observed data, as shown in Figure \ref{F_5}, with an RMSE of $1.1 \times 10^{-8}$. 
The weight matrix ${\bW_k}$ is crucial in optimizing the separation process. 
As shown in Figure \ref{F_6}, during the separation procedure, updates to the weight matrix enable it to accurately identify zero-valued locations within the sparse component and assign them a weight of 1, thereby ensuring precise separation of the sparse component $\bS$.
In conclusion, the low RMSE values across both components confirm the robustness of the proposed method in effectively handling noisy and sparse data. The inclusion of the weight matrix further enhances the accuracy of the decomposition by dynamically adjusting to the structure of the data.

To further observe the changes in the weight matrix, we present its evolution over iterations in Figure \ref{Wkk}. 
As shown in the figure, the iterative updates of the weight matrix  $\bW$ reveal how the model progressively refines its focus on essential structural elements in sparse components. 
Initially, the matrix is uniformly weighted (Iter = 1), indicating an absence of prior knowledge regarding sparse component positions. The weight matrix gradually adapts as iterations increase, highlighting significant structural components (the ``M'' shape) while suppressing less relevant features. By Iter = 500, the model has effectively learned to focus on the main structure, with clearer differentiation between significant (dark blue) and negligible (yellow) regions. 
{This process can be analogized to the self-attention mechanism, where the model dynamically assigns different levels of ``attention'' or ``importance'' to each position based on their relevance. In self-attention, each element in a sequence learns to selectively attend to others based on similarity scores, adjusting its focus iteratively. Similarly, in the proposed model,  the weights are adjusted to emphasize important structural elements of the sparse component while reducing sensitivity to noise. This adaptive weighting scheme enables the model to identify and retain critical features with increased robustness.}

\begin{figure*}
\centering
\subfloat[Frames]{ 
\begin{minipage}[t]{0.18\textwidth}
\centering
\includegraphics[width=0.6\linewidth]{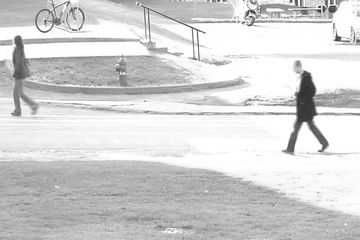} \\ \vspace{14.65mm} 
\includegraphics[width=0.6\linewidth]{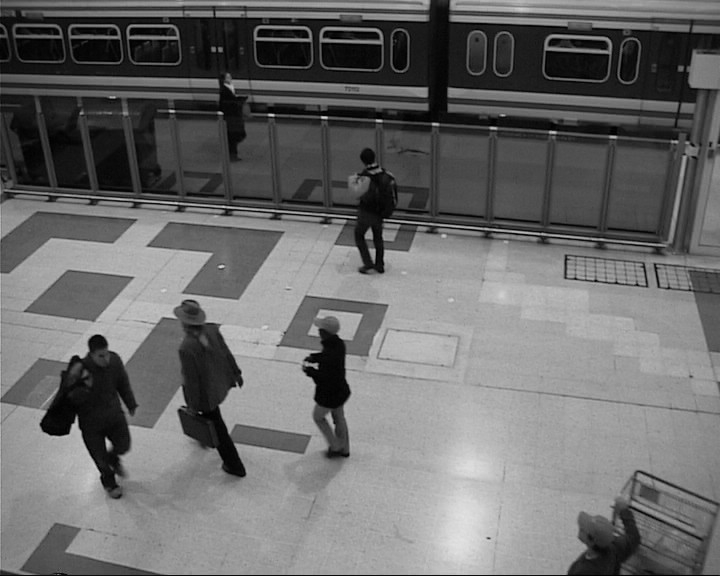} \\
\vspace{17mm}
\includegraphics[width=0.6\linewidth]{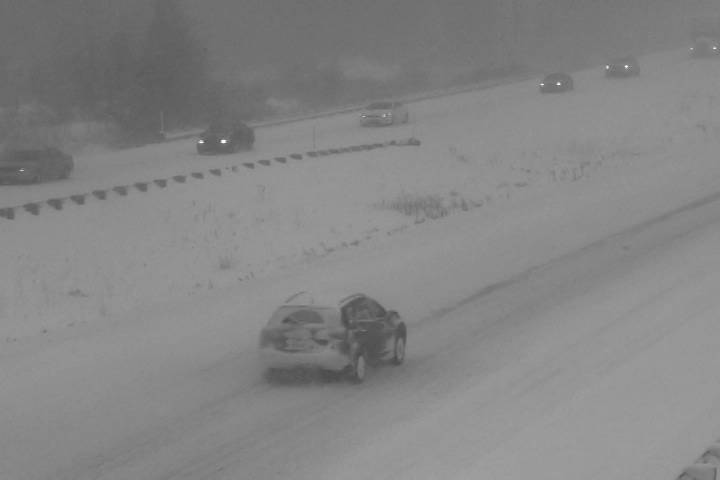} \\
\vspace{13.8mm}
\includegraphics[width=0.6\linewidth]{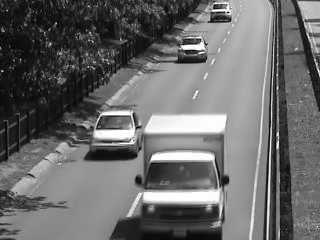} \\
\vspace{16mm}
\includegraphics[width=0.6\linewidth]{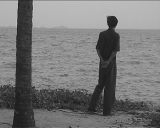} \\
\vspace{15mm}
\end{minipage}} \hspace{-1.4cm}
\subfloat[WNNM]{ 
\begin{minipage}[t]{0.18\textwidth}
\centering
\includegraphics[width=0.6\linewidth]{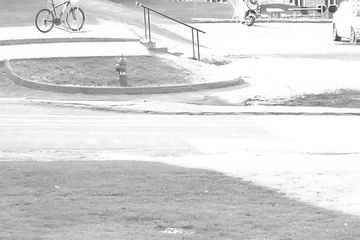}\\ \vspace{1mm} 
\includegraphics[width=0.6\linewidth]{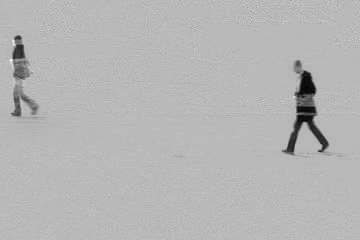} \\ \vspace{1mm}  
\includegraphics[width=0.6\linewidth]{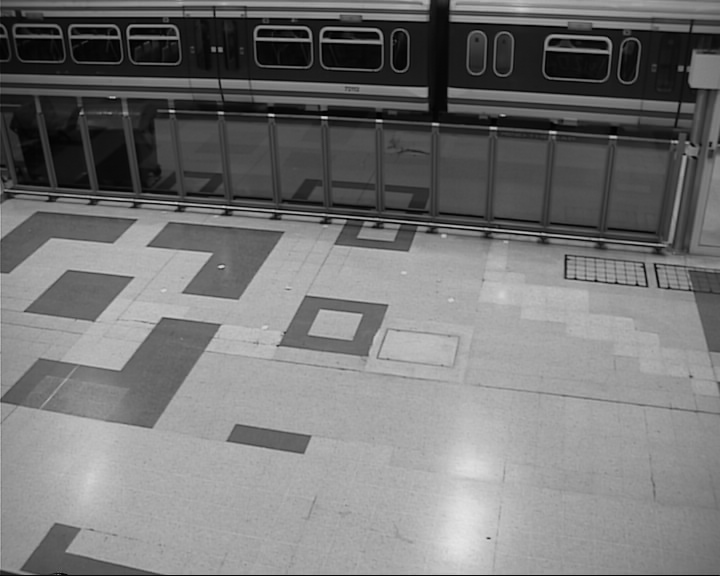}\\ \vspace{1mm} 
\includegraphics[width=0.6\linewidth]{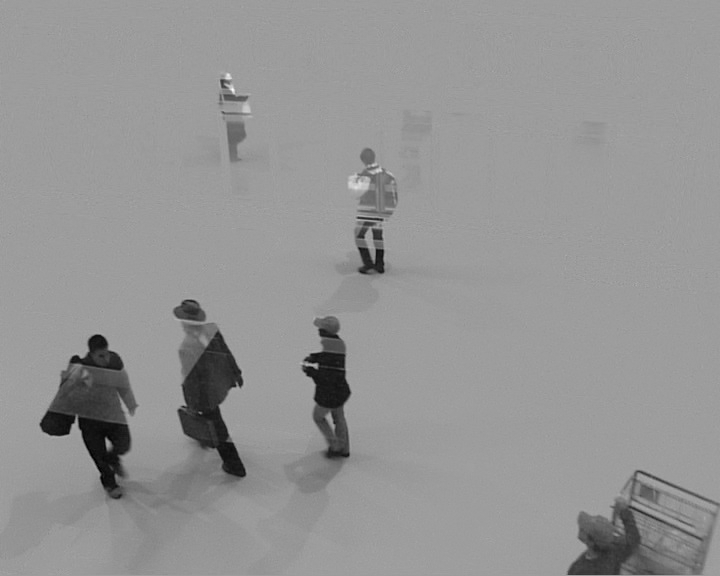} \\ \vspace{1mm}  
\includegraphics[width=0.6\linewidth]{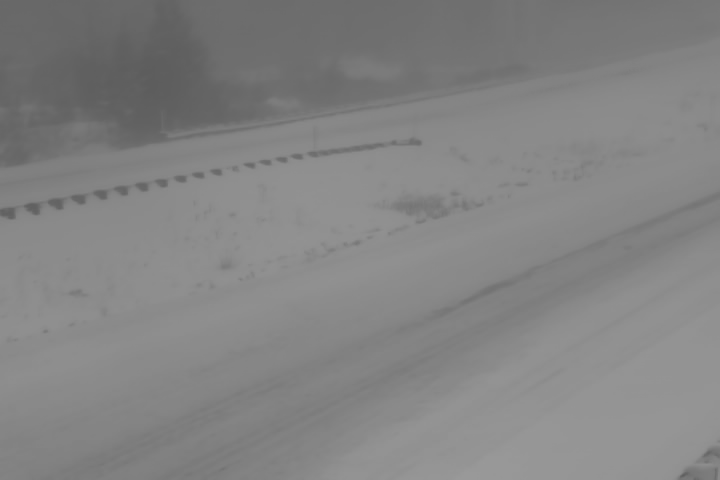}\\ \vspace{1mm} 
\includegraphics[width=0.6\linewidth]{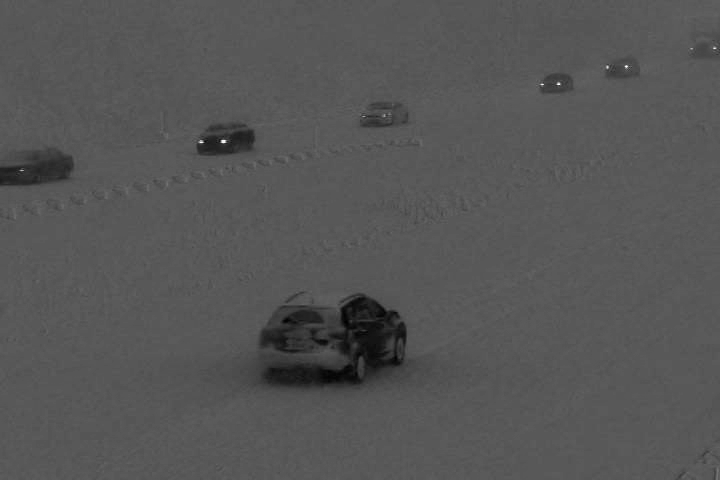} \\ \vspace{1mm}  
\includegraphics[width=0.6\linewidth]{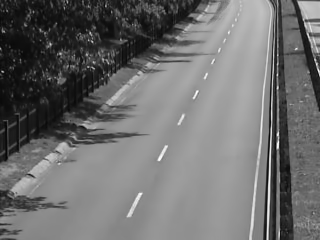}\\ 
\vspace{1mm} 
\includegraphics[width=0.6\linewidth]{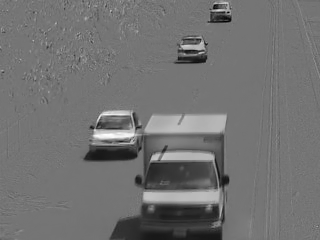} \\ \vspace{1mm}   
\includegraphics[width=0.6\linewidth]{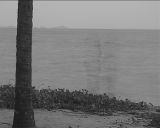}\\ \vspace{1mm} 
\includegraphics[width=0.6\linewidth]{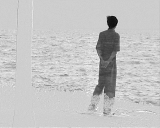} \\
\end{minipage}} \hspace{-1.4cm}
   \subfloat[NC]{ 
\begin{minipage}[t]{0.18\textwidth}
\centering
\includegraphics[width=0.6\linewidth]{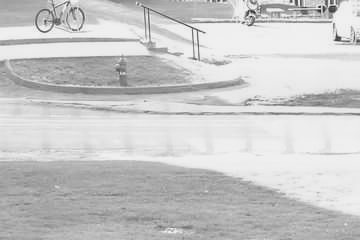}\\ \vspace{1mm} 
\includegraphics[width=0.6\linewidth]{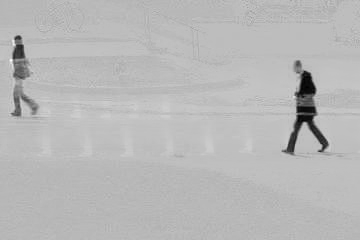}\\  \vspace{1mm}  
\includegraphics[width=0.6\linewidth]{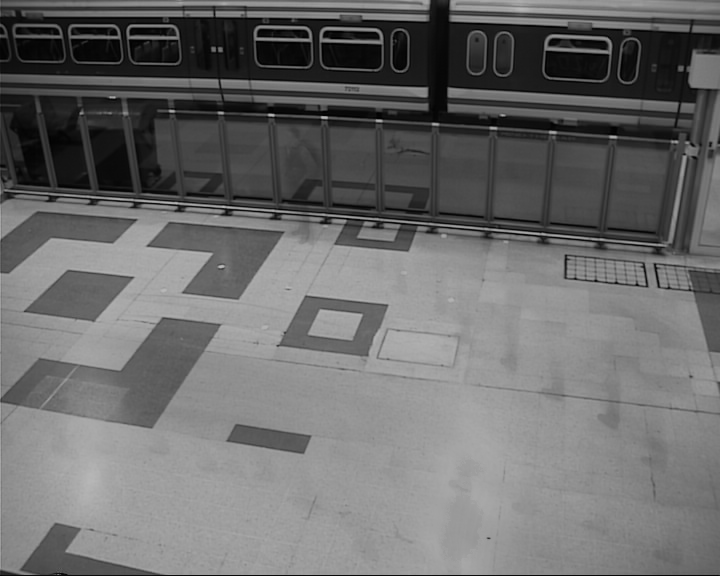}\\ \vspace{1mm} 
\includegraphics[width=0.6\linewidth]{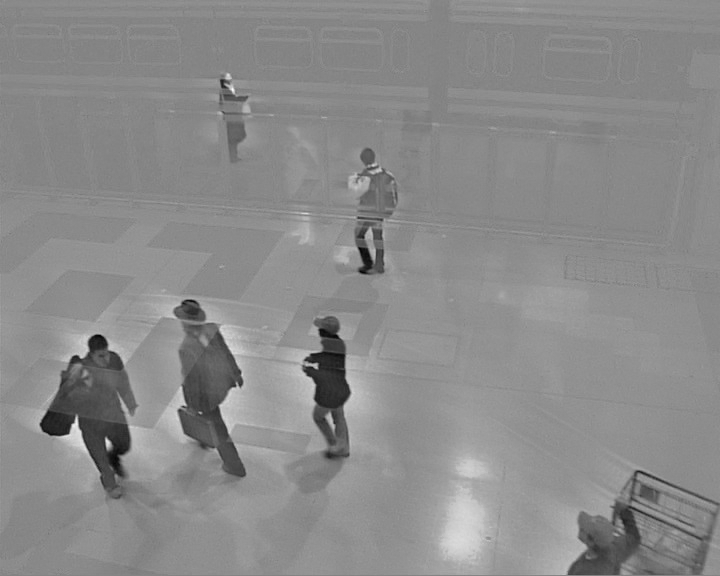} \\ \vspace{1mm}
\includegraphics[width=0.6\linewidth]{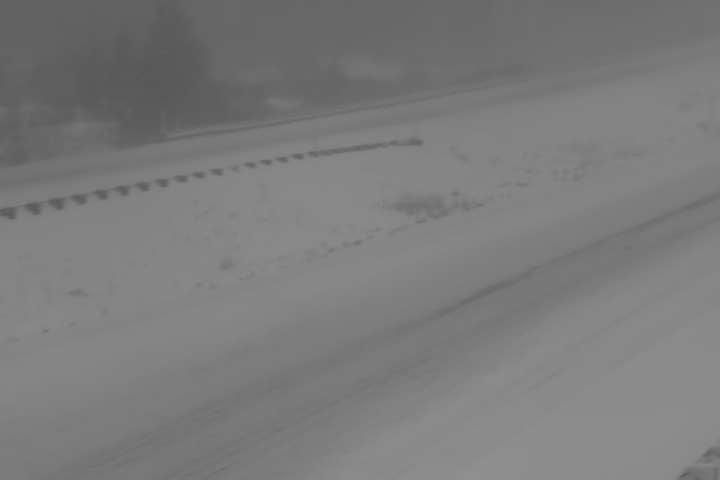}\\ \vspace{1mm} 
\includegraphics[width=0.6\linewidth]{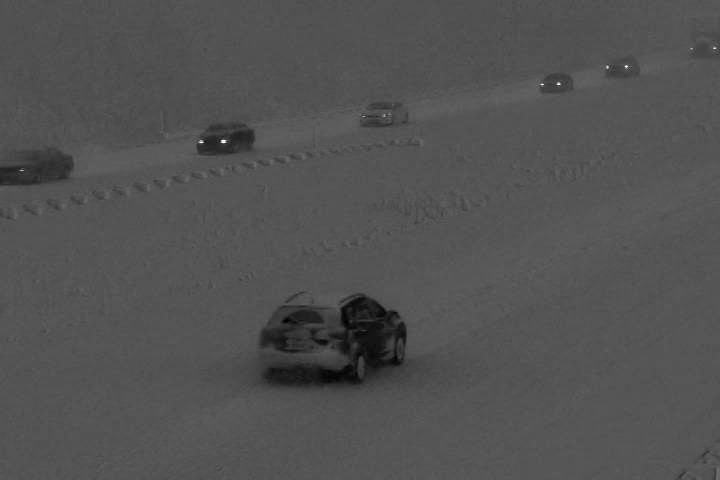} \\ \vspace{1mm}   
\includegraphics[width=0.6\linewidth]{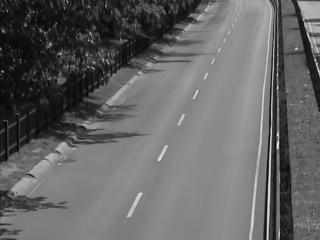}\\ \vspace{1mm} 
\includegraphics[width=0.6\linewidth]{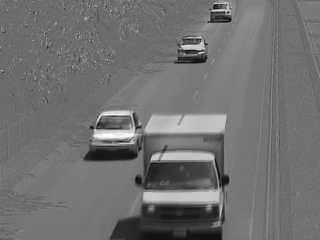} \\ \vspace{1mm}  
\includegraphics[width=0.6\linewidth]{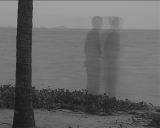}\\ \vspace{1mm} 
\includegraphics[width=0.6\linewidth]{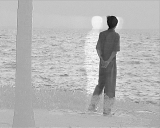} \\
\end{minipage}}  \hspace{-1.4cm}
\subfloat[HQF]{ 
\begin{minipage}[t]{0.18\textwidth}
\centering
\includegraphics[width=0.6\linewidth]{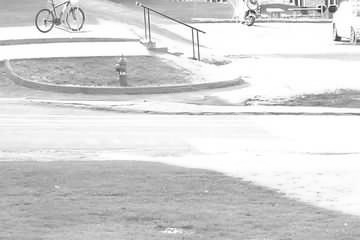}\\ \vspace{1mm} 
\includegraphics[width=0.6\linewidth]{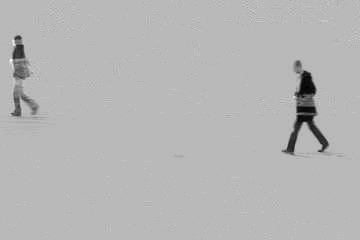}\\  \vspace{1mm}  
\includegraphics[width=0.6\linewidth]{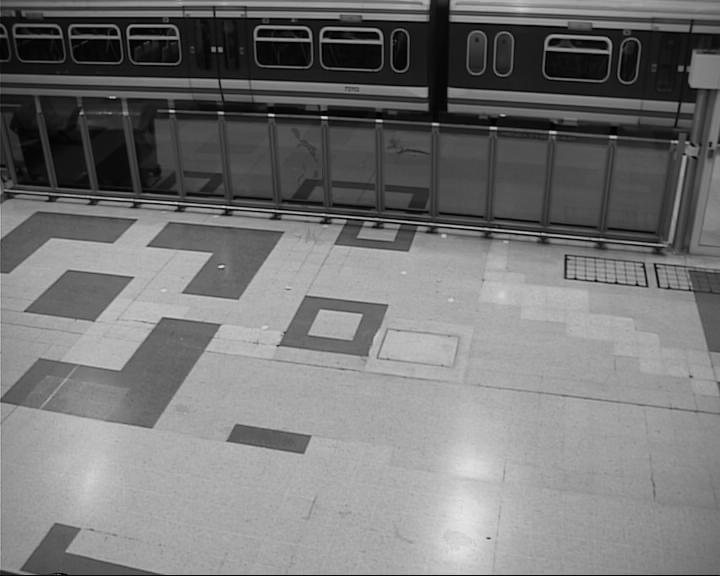}\\ \vspace{1mm} 
\includegraphics[width=0.6\linewidth]{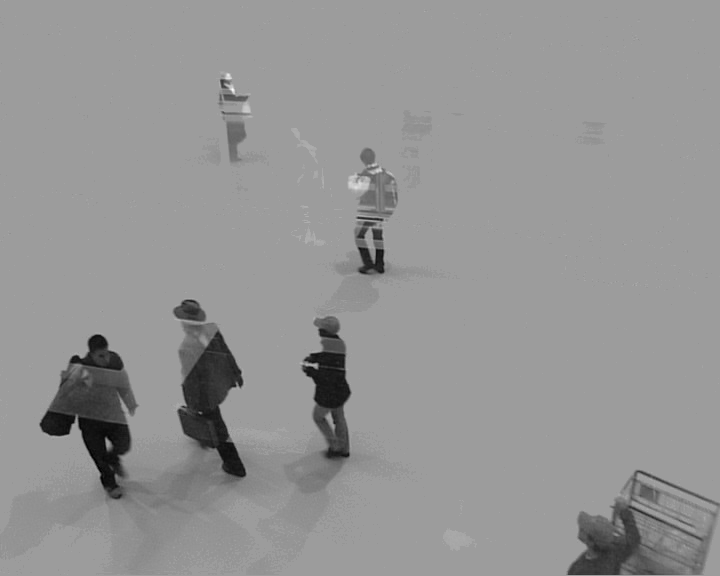} \\ \vspace{1mm}
\includegraphics[width=0.6\linewidth]{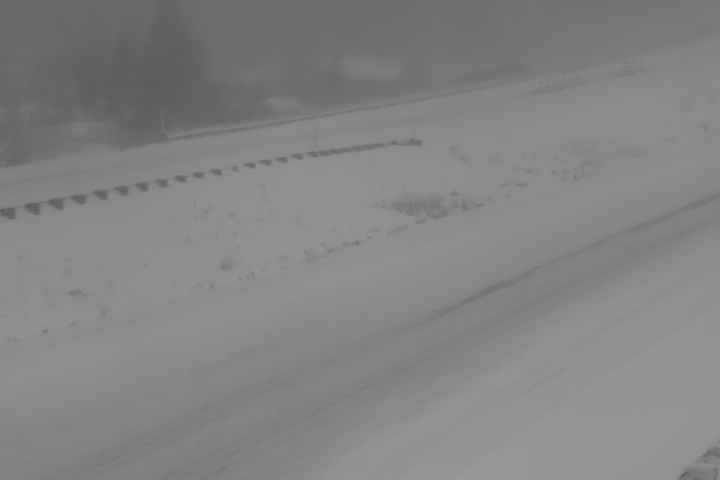}\\ \vspace{1mm} 
\includegraphics[width=0.6\linewidth]{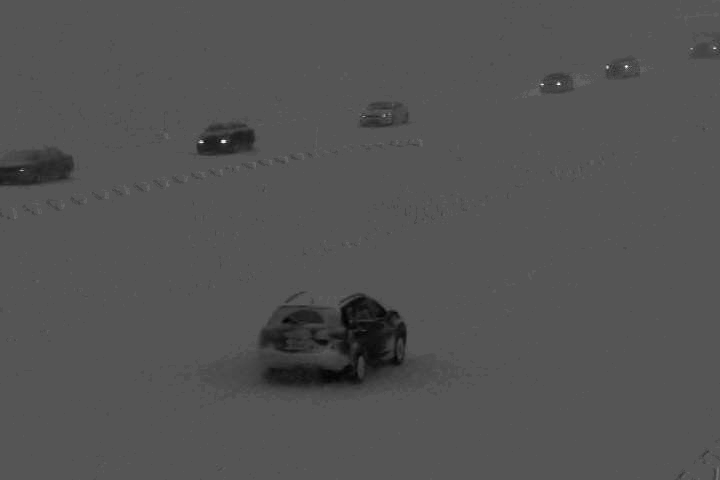} \\ \vspace{1mm}   
\includegraphics[width=0.6\linewidth]{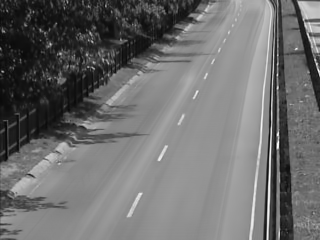}\\ \vspace{1mm} 
\includegraphics[width=0.6\linewidth]{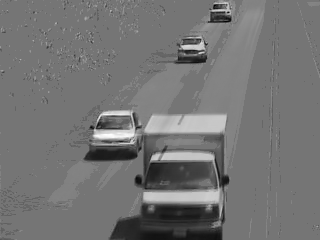} \\ \vspace{1mm}  
\includegraphics[width=0.6\linewidth]{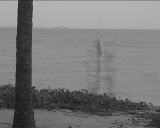}\\ \vspace{1mm} 
\includegraphics[width=0.6\linewidth]{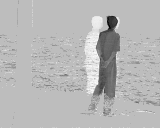} \\
\end{minipage}}  \hspace{-1.4cm}
 \subfloat[OBC]{
\begin{minipage}[t]{0.18\textwidth}
\centering
\includegraphics[width=0.6\linewidth]{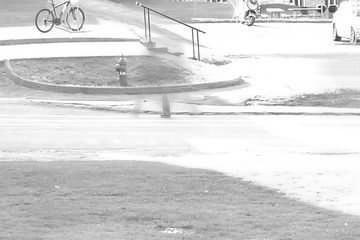}\\ \vspace{1mm} 
\includegraphics[width=0.6\linewidth]{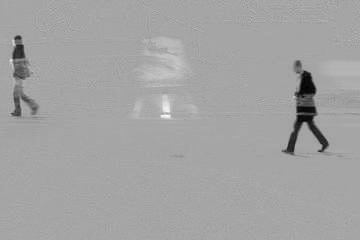} \\ \vspace{1mm}  
\includegraphics[width=0.6\linewidth]{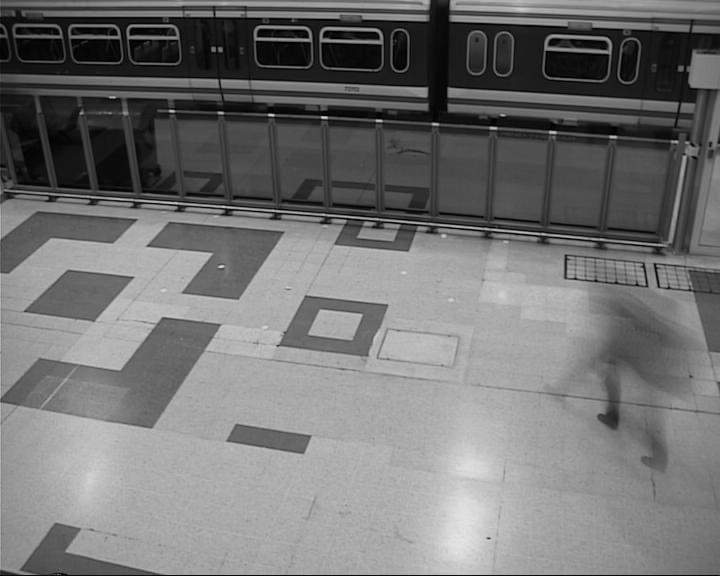}\\ \vspace{1mm} 
\includegraphics[width=0.6\linewidth]{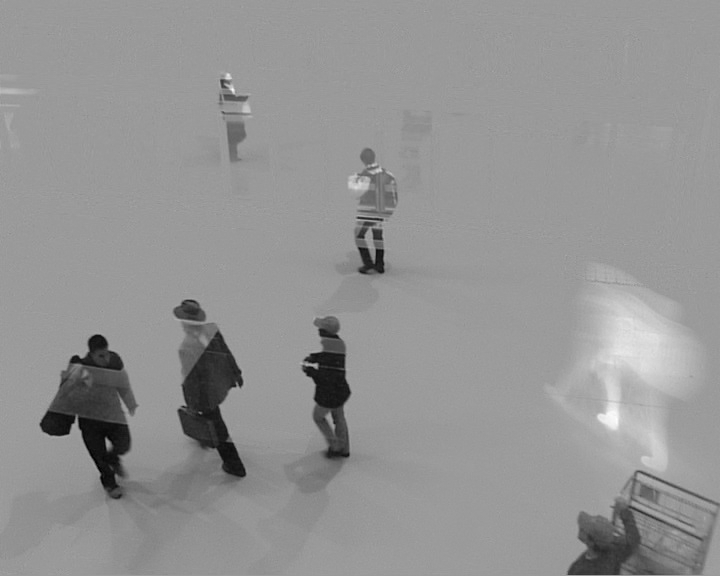} \\ \vspace{1mm}  
\includegraphics[width=0.6\linewidth]{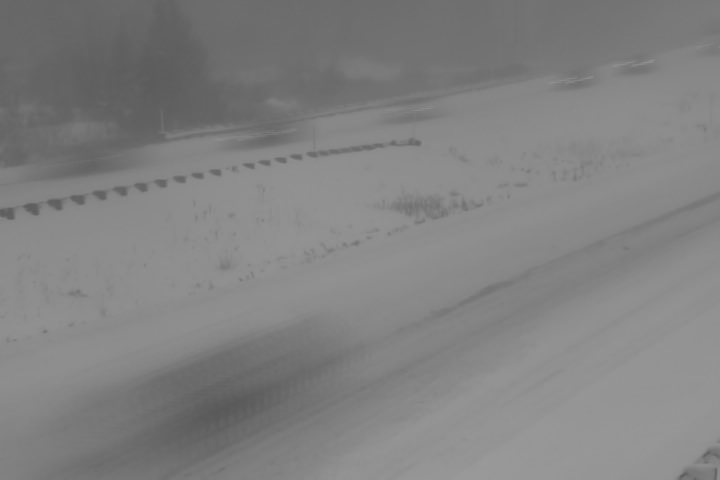}\\ \vspace{1mm} 
\includegraphics[width=0.6\linewidth]{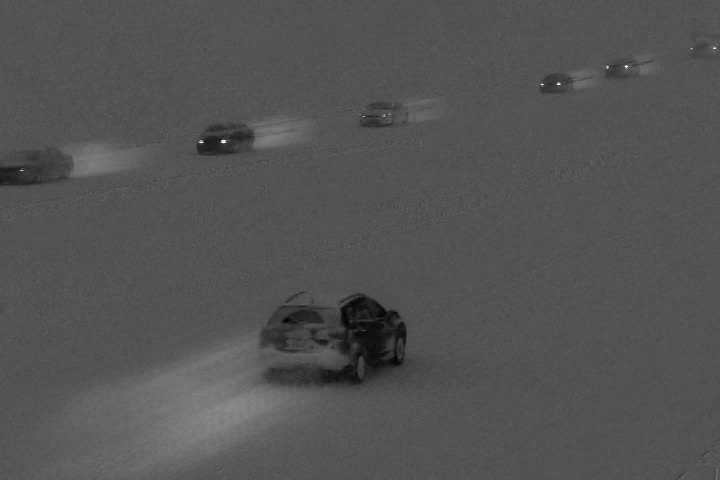} \\ \vspace{1mm}  
\includegraphics[width=0.6\linewidth]{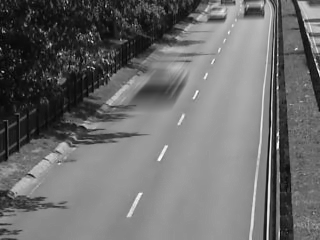}\\ \vspace{1mm} 
\includegraphics[width=0.6\linewidth]{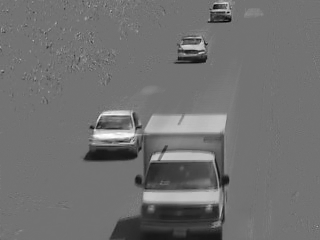} \\ \vspace{1mm} 
\includegraphics[width=0.6\linewidth]{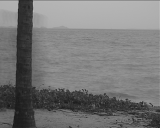}\\ \vspace{1mm} 
\includegraphics[width=0.6\linewidth]{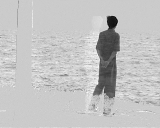} \\
\end{minipage}}\hspace{-1.4cm}
\subfloat[HOW]{ 
\begin{minipage}[t]{0.18\textwidth}
\centering
\includegraphics[width=0.6\linewidth]{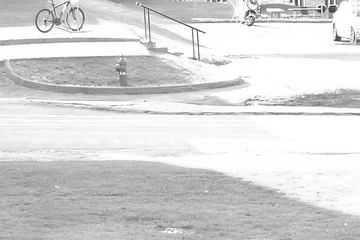}\\ \vspace{1mm} 
\includegraphics[width=0.6\linewidth]{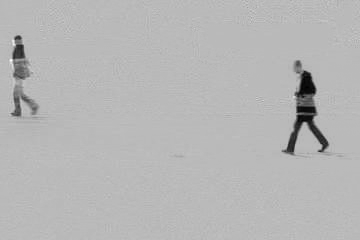} \\ \vspace{1mm}  
\includegraphics[width=0.6\linewidth]{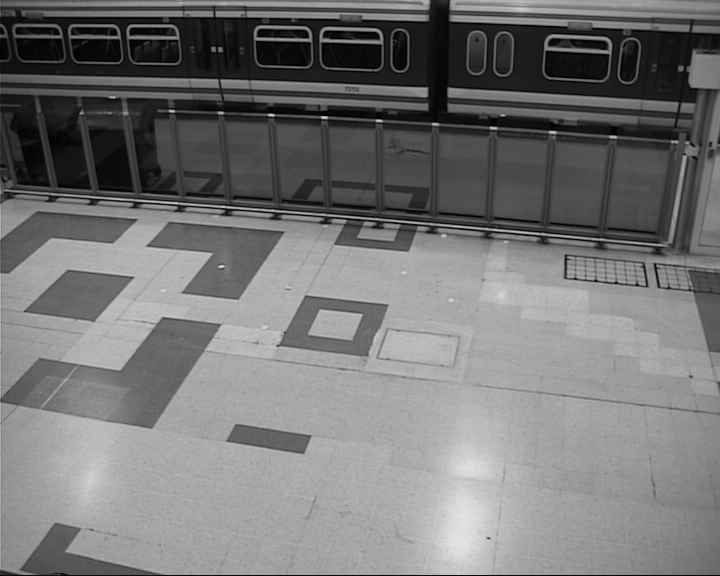}\\ \vspace{1mm} 
\includegraphics[width=0.6\linewidth]{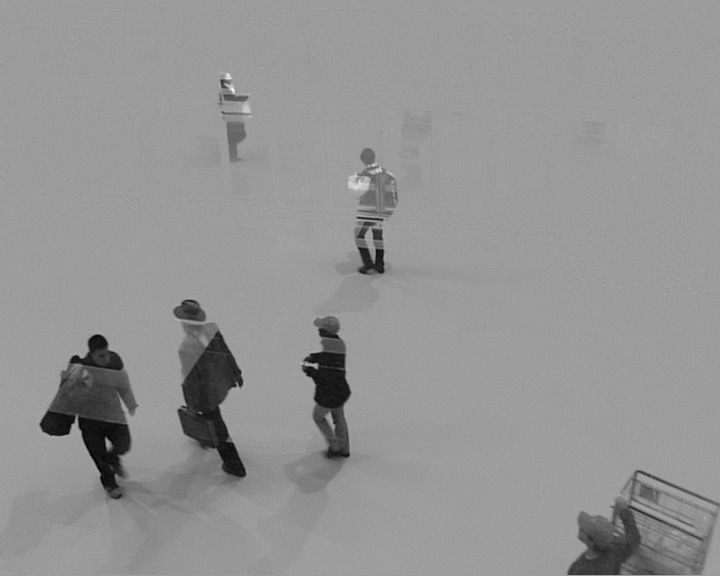} \\ \vspace{1mm}  
\includegraphics[width=0.6\linewidth]{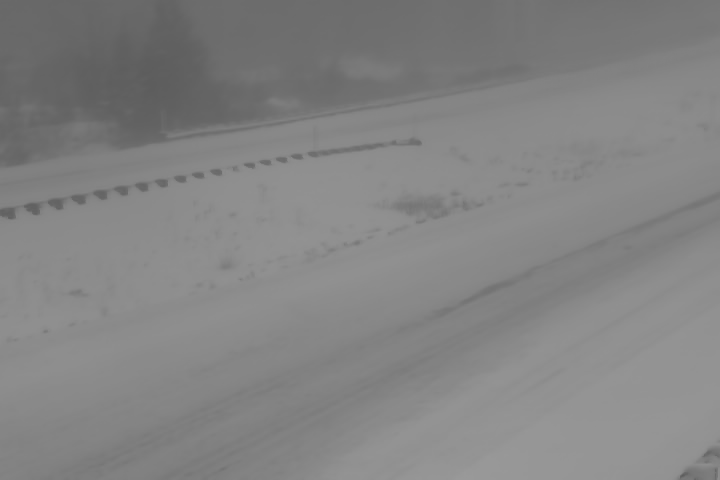}\\ \vspace{1mm} 
\includegraphics[width=0.6\linewidth]{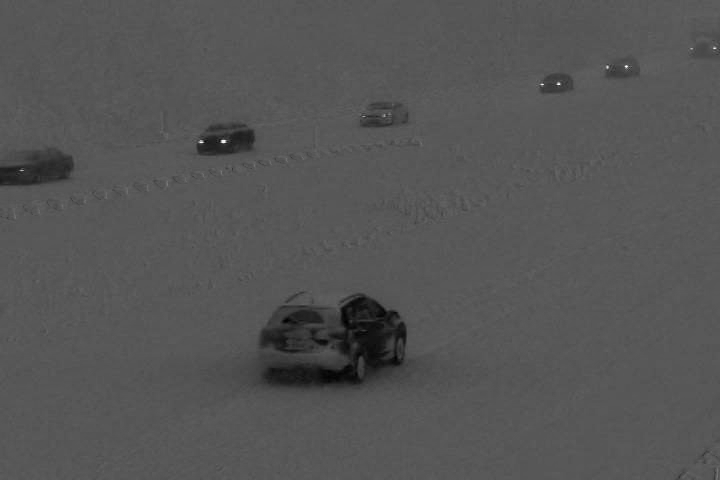} \\ \vspace{1mm}  
\includegraphics[width=0.6\linewidth]{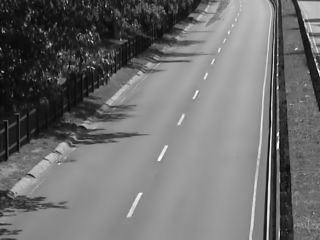}\\ \vspace{1mm} 
\includegraphics[width=0.6\linewidth]{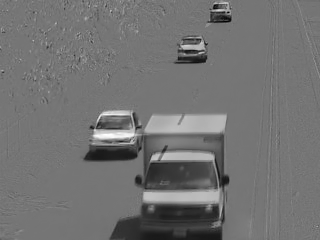} \\ \vspace{1mm} 
\includegraphics[width=0.6\linewidth]{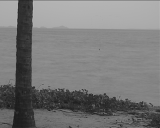}\\ \vspace{1mm} 
\includegraphics[width=0.6\linewidth]{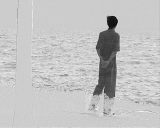} \\
\end{minipage}}\hspace{-1.4cm}
\subfloat[W-L2]{ 
\begin{minipage}[t]{0.18\textwidth}
\centering
\includegraphics[width=0.6\linewidth]{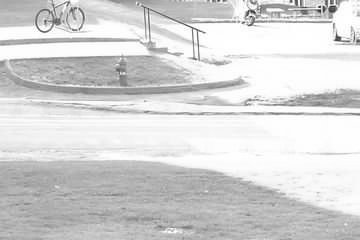}\\ \vspace{1mm} 
\includegraphics[width=0.6\linewidth]{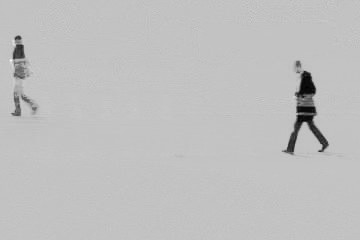} \\ \vspace{1mm}  
\includegraphics[width=0.6\linewidth]{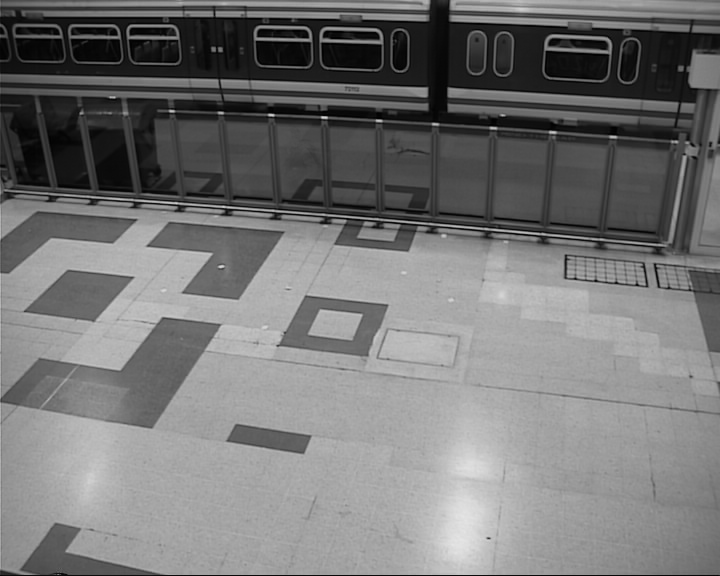}\\ \vspace{1mm} 
\includegraphics[width=0.6\linewidth]{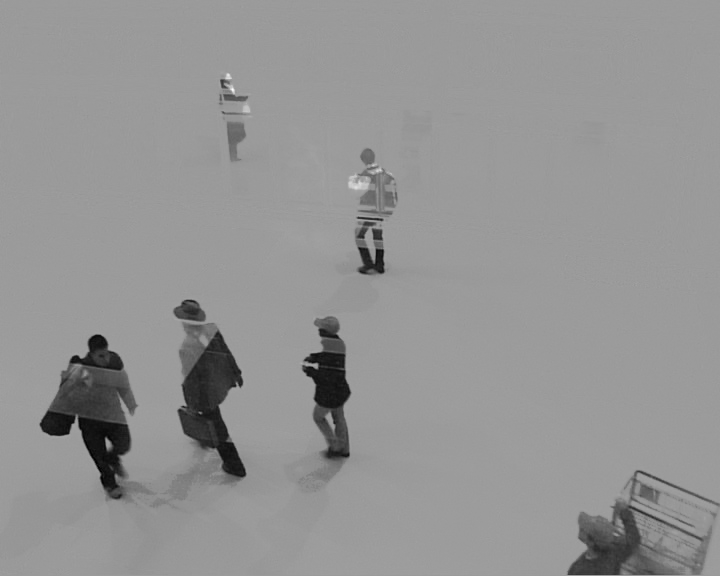} \\ \vspace{1mm}  
\includegraphics[width=0.6\linewidth]{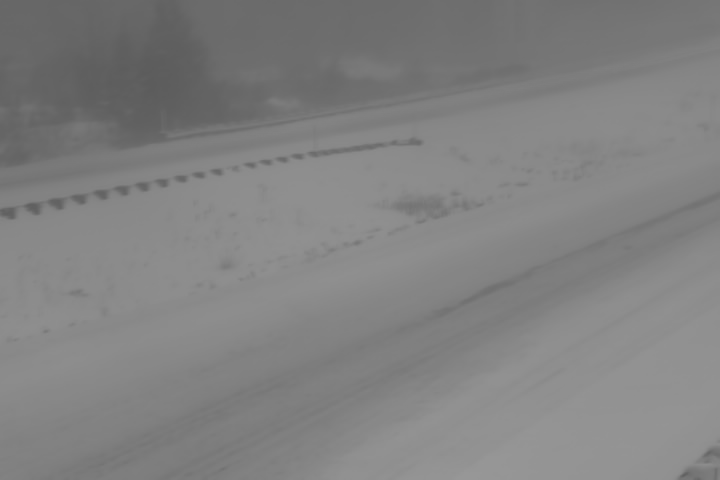}\\ \vspace{1mm} 
\includegraphics[width=0.6\linewidth]{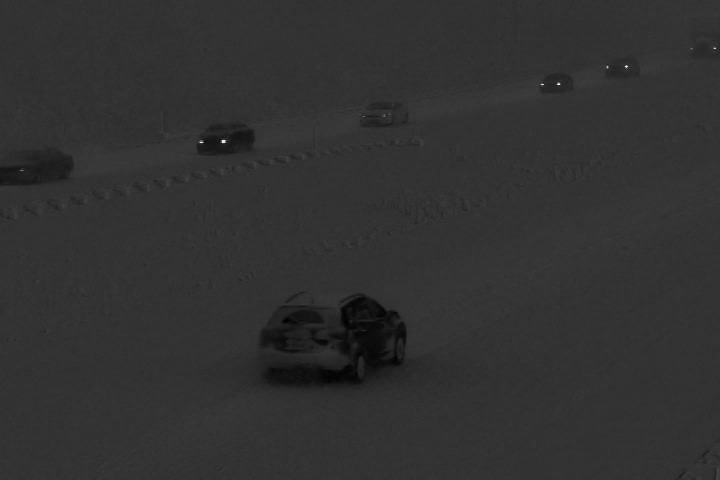} \\ \vspace{1mm}  
\includegraphics[width=0.6\linewidth]{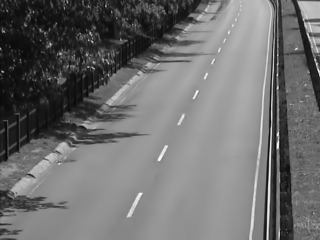}\\ \vspace{1mm} 
\includegraphics[width=0.6\linewidth]{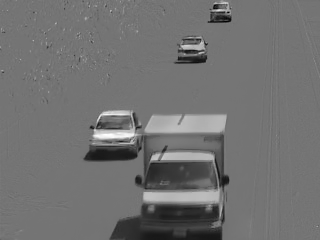} \\ \vspace{1mm} 
\includegraphics[width=0.6\linewidth]{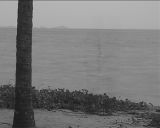}\\ \vspace{1mm} 
\includegraphics[width=0.6\linewidth]{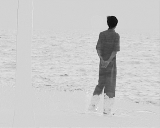} \\
\end{minipage}}
\caption{Background subtraction for five 200-frame surveillance video sequences. From top to bottom: Pedestrians, PETS2006, Blizzard, Highway, and WaterSurface, respectively.   }
\label{Video}
\end{figure*}

\begin{figure*}
\centering
\subfloat[Observed]{ 
\begin{minipage}[t]{0.16\textwidth}
\centering
\includegraphics[width=0.6\linewidth]{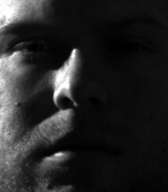} \\ \vspace{1mm} 
\includegraphics[width=0.6\linewidth]{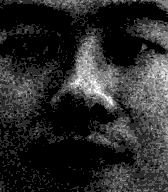} \\
\vspace{1mm} 
\includegraphics[width=0.6\linewidth]{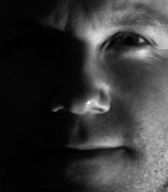} \\ \vspace{1mm} 
\includegraphics[width=0.6\linewidth]{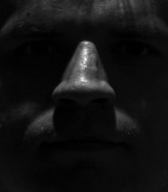} \\  \vspace{1mm} 
\includegraphics[width=0.6\linewidth]{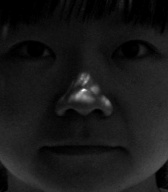} \\ \vspace{1mm} 
\includegraphics[width=0.6\linewidth]{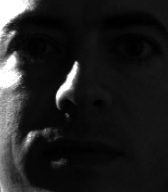} \\
\vspace{1mm} 
\includegraphics[width=0.6\linewidth]{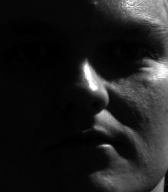} \\ \vspace{1mm} 
\includegraphics[width=0.6\linewidth]{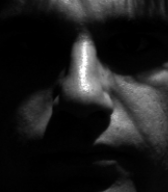} \\  \vspace{1mm} 
\end{minipage}}
\hspace{-1.2cm}
\subfloat[WNNM]{ 
\begin{minipage}[t]{0.16\textwidth}
\centering
\includegraphics[width=0.6\linewidth]{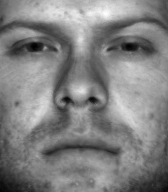} \\ \vspace{1mm} 
\includegraphics[width=0.6\linewidth]{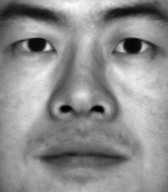} \\
\vspace{1mm} 
\includegraphics[width=0.6\linewidth]{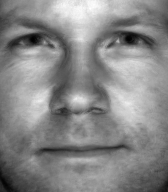} \\ \vspace{1mm} 
\includegraphics[width=0.6\linewidth]{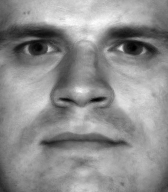} \\  \vspace{1mm} 
\includegraphics[width=0.6\linewidth]{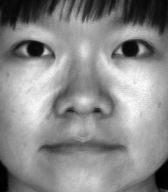} \\ \vspace{1mm} 
\includegraphics[width=0.6\linewidth]{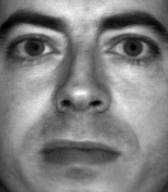} \\
\vspace{1mm} 
\includegraphics[width=0.6\linewidth]{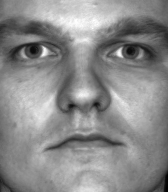} \\ \vspace{1mm} 
\includegraphics[width=0.6\linewidth]{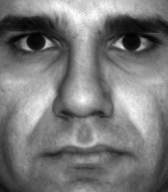} \\  \vspace{1mm} 
\end{minipage}} \hspace{-1.2cm}
\subfloat[NC]{ 
\begin{minipage}[t]{0.16\textwidth}
\centering
\includegraphics[width=0.6\linewidth]{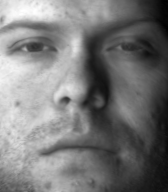} \\ \vspace{1mm} 
\includegraphics[width=0.6\linewidth]{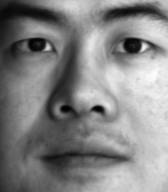} \\
\vspace{1mm} 
\includegraphics[width=0.6\linewidth]{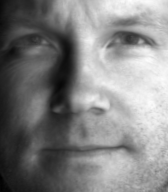} \\ \vspace{1mm} 
\includegraphics[width=0.6\linewidth]{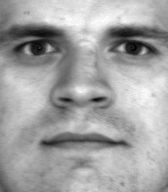} \\  \vspace{1mm} 
\includegraphics[width=0.6\linewidth]{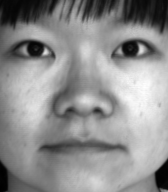} \\ \vspace{1mm} 
\includegraphics[width=0.6\linewidth]{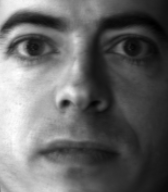} \\
\vspace{1mm} 
\includegraphics[width=0.6\linewidth]{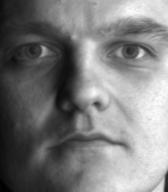} \\ \vspace{1mm} 
\includegraphics[width=0.6\linewidth]{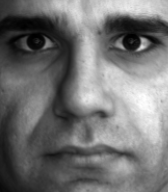} \\  \vspace{1mm} 
\end{minipage}} \hspace{-1.2cm}
\subfloat[HQF]{ 
\begin{minipage}[t]{0.16\textwidth}
\centering
\includegraphics[width=0.6\linewidth]{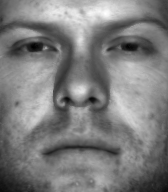} \\ \vspace{1mm} 
\includegraphics[width=0.6\linewidth]{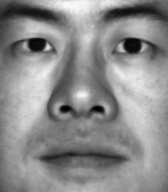} \\
\vspace{1mm} 
\includegraphics[width=0.6\linewidth]{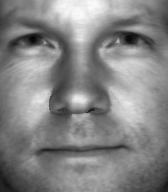} \\ \vspace{1mm} 
\includegraphics[width=0.6\linewidth]{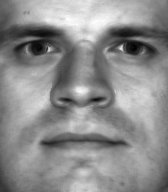} \\  \vspace{1mm} 
\includegraphics[width=0.6\linewidth]{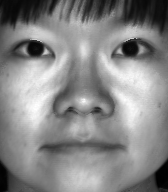} \\ \vspace{1mm} 
\includegraphics[width=0.6\linewidth]{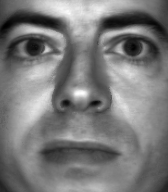} \\
\vspace{1mm} 
\includegraphics[width=0.6\linewidth]{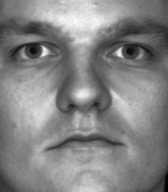} \\ \vspace{1mm} 
\includegraphics[width=0.6\linewidth]{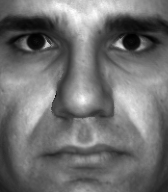} \\  \vspace{1mm} 
\end{minipage}} \hspace{-1.2cm}
\subfloat[OBC]{ 
\begin{minipage}[t]{0.16\textwidth}
\centering
\includegraphics[width=0.6\linewidth]{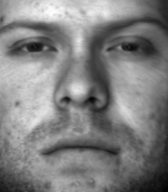} \\ \vspace{1mm} 
\includegraphics[width=0.6\linewidth]{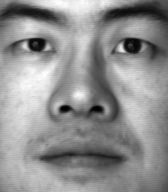} \\
\vspace{1mm} 
\includegraphics[width=0.6\linewidth]{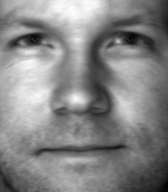} \\ \vspace{1mm} 
\includegraphics[width=0.6\linewidth]{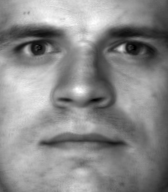} \\  \vspace{1mm} 
\includegraphics[width=0.6\linewidth]{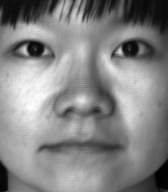} \\ \vspace{1mm} 
\includegraphics[width=0.6\linewidth]{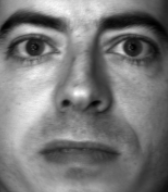} \\
\vspace{1mm} 
\includegraphics[width=0.6\linewidth]{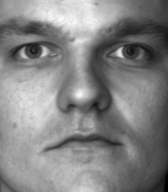} \\ \vspace{1mm} 
\includegraphics[width=0.6\linewidth]{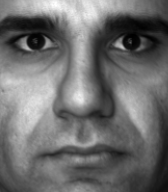} \\  \vspace{1mm} 
\end{minipage}} \hspace{-1.2cm}
\subfloat[HOW]{ 
\begin{minipage}[t]{0.16\textwidth}
\centering
\includegraphics[width=0.6\linewidth]{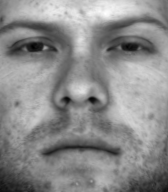} \\ \vspace{1mm} 
\includegraphics[width=0.6\linewidth]{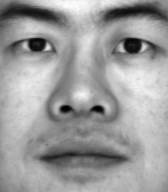} \\
\vspace{1mm} 
\includegraphics[width=0.6\linewidth]{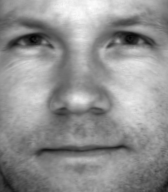} \\ \vspace{1mm} 
\includegraphics[width=0.6\linewidth]{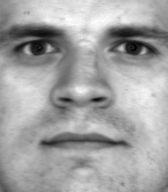} \\  \vspace{1mm} 
\includegraphics[width=0.6\linewidth]{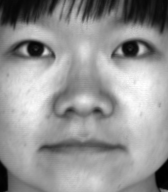} \\ \vspace{1mm} 
\includegraphics[width=0.6\linewidth]{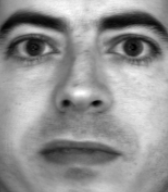} \\
\vspace{1mm} 
\includegraphics[width=0.6\linewidth]{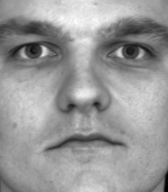} \\ \vspace{1mm} 
\includegraphics[width=0.6\linewidth]{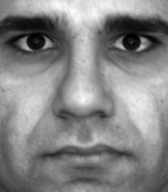} \\  \vspace{1mm} 
\end{minipage}} \hspace{-1.2cm}
\subfloat[W-L2]{ 
\begin{minipage}[t]{0.16\textwidth}
\centering
\includegraphics[width=0.6\linewidth]{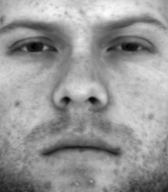} \\ \vspace{1mm} 
\includegraphics[width=0.6\linewidth]{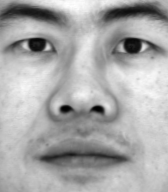} \\
\vspace{1mm} 
\includegraphics[width=0.6\linewidth]{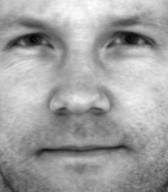} \\ \vspace{1mm} 
\includegraphics[width=0.6\linewidth]{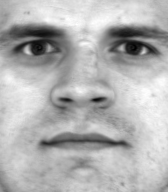} \\  \vspace{1mm} 
\includegraphics[width=0.6\linewidth]{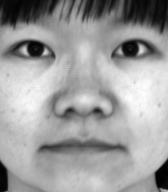} \\ \vspace{1mm} 
\includegraphics[width=0.6\linewidth]{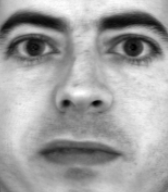} \\
\vspace{1mm} 
\includegraphics[width=0.6\linewidth]{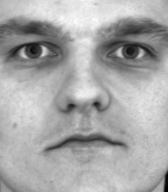} \\ \vspace{1mm} 
\includegraphics[width=0.6\linewidth]{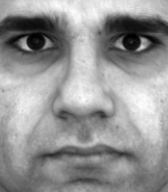} \\  \vspace{1mm} 
\end{minipage}} \hspace{-1.2cm}
\caption{Face shadow removal for Yale B dataset. }
\label{YaleB}
\end{figure*}

\subsection{Comparison with the state-of-art  methods} 
To assess the performance of the proposed method, we conducted numerical experiments and compared the proposed method, called W-L2, with several state-of-the-art methods, including GoDec+ \cite{guo2017godec+}, WNNM \cite{gu2017weighted}, NC \cite{wen2019robust}, OBC \cite{liu2021efficient}, HQF \cite{wang2023robustHQF}, and HOW \cite{wang2023robust}. It is worth noting that in model \ref{RPCA-WLS}, the $F$ norm can be replaced with the $\ell_0$ norm, introducing an adaptive weighted $\ell_0$ norm. In the first set of experiments, we also compare the results of the W-L0 method with those methods.

\subsubsection{Synthetic experiments}

We first conduct a comparison of synthetic data.  
We construct random matrices $ \bU \in \mathbb{R}^{m \times r} $ and $ \bV \in \mathbb{R}^{r \times n} $ to generate the synthetic matrix $ \bX = \bU \bV $, where each element is independently drawn from a standard Gaussian distribution. For simplicity, we set $ m = n $ and define the rank as $ r = m/50 $, ensuring that $ \bX $ exhibits a low-rank structure. 
Next, we add the sparse noise $ \bS $ to $ \bX $ to obtain the observation matrix. This sparse noise $ \bS $ is designed with varying levels of sparsity and SNR to evaluate the robustness of each method under different sparsity and noise conditions.
Finally, we perform a quantitative comparison of various methods to assess their performance in low-rank and sparse decomposition.

Tables \ref{Table1} and \ref{Table2} show the RMSE of the low-rank matrix $\bX$ separated using different methods at the sparsity levels of 10\% and 20\% under varying SNRs, respectively. The experiments are conducted for two matrix sizes, $m = 500$ and $m = 1000$. The RMSE values and the average computation time for each method are provided for both matrix sizes. 
In these experiments, we observed that methods such as W-L2 and W-L0 exhibited excellent performance in terms of RMSE across various scenarios. Specifically, the proposed W-L2 method achieved the lowest RMSE values across all tested conditions, consistently remaining at the extremely low order of magnitude of $10^{-11}$.  Although the W-L0 method has higher error than W-L2, it offers a significant advantage in computational efficiency, making it more suitable for large-scale or real-time applications.  
This result demonstrates that weight-based regularization techniques provide significant performance improvements in RPCA. 
In addition to accuracy, computational efficiency is a key factor in practical applications.  

We noted that methods like W-L2, HQF and HOW exhibit relatively short computation times compared to traditional methods such as WNNM and NC. Notably, the NC method took 34.03 seconds when $m = 500$ and as long as 159.53 seconds when $m = 1000$. This indicates that, although these traditional methods show reasonably accurate results in terms of RMSE,  their high computational costs limit their potential application in large-scale problems.
In summary, the experimental data in Tables \ref{Table1} and \ref{Table2} demonstrate that the proposed W-L2 method not only excels in terms of accuracy but also maintains competitive computational efficiency.

\subsubsection{Background subtraction from video}
In this subsection, we use RPCA to model surveillance video data, in order to effectively separate the dynamic foreground (sparse component) from the static background (low-rank component) in the videos. We selected five benchmark video sequences, each containing 200 frames, from the CDnet 2014 dataset \cite{wang2014cdnet} and \cite{li2004statistical}.
To perform matrix decomposition, we first reshape the three-dimensional video data into a matrix of size $ mn \times 200 $. Given that the background remains nearly static, we decompose the matrix $ \bY $ into the sum of a low-rank matrix $ \bX $ with rank 1 and a sparse matrix $ \bS $. Subsequently, we rearrange the decomposed matrices $ \bX $ and $ \bS $ by columns to form a data structure of size $ m \times n \times 200 $. This decomposition process enables us to effectively distinguish the background and foreground in the videos.

Figure \ref{Video} provides a detailed comparison of the background subtraction performance of different methods across five 200-frame surveillance video sequences. The results demonstrate that the W-L2 method consistently excels across all sequences, offering not only clear and precise foreground extraction but also significantly reducing noise and ghosting effects. Notably, the W-L2 method maintains object boundary clarity and accuracy even when processing sequences with subtle movements or small objects. The HOW method also performs well, though slight background residuals can be observed in more complex scenes. In contrast, the HQF, OBC  and NC methods achieve a certain degree of foreground separation but often introduce noticeable background noise, which directly affects the accuracy of foreground extraction. In summary, the W-L2 method stands out for its superior performance in foreground extraction accuracy and robustness to environmental changes, while also maintaining efficient computation.

\subsubsection{Face shadow removal}
Additionally, we conducted experiments on the facial shadow removal task to evaluate the effectiveness of our proposed method for different tasks. We used the Yale B dataset, which contains 64 facial images of size $ 168 \times 192 $ under various lighting conditions and angles. In this task, we applied a low-rank and sparse decomposition strategy to handle shadow removal: the low-rank component represents relatively constant background information, such as facial contours and overall shape, while the sparse component corresponds to localized variations in shadows and lighting artifacts. 
To perform matrix decomposition, we vectorized the 64 images and constructed a matrix $ \bY $ of size $ 32256 \times 64 $. We then decomposed $ \bY $ into the sum of a low-rank matrix $ \bX $ with rank 1 and a sparse matrix $ \bS $. 

The restoration results for different methods are shown in Figure \ref{YaleB}.  The experimental results demonstrate that the W-L2 method consistently achieves excellent shadow removal performance across all tested face images. This method not only effectively eliminates shadow artifacts, but also successfully preserves facial details and textures, producing natural and consistent image outputs. 
In comparison, the HOW method also delivers good shadow removal results; however, in some cases, it leads to blurred facial features, which affects the overall clarity of the images. HQF and OBC methods provide relatively accurate results, but they tend to leave residual shadow artifacts, especially in regions with strong lighting contrasts, thus compromising the visual quality of the images.
On the other hand, the WNNM and NC methods exhibit the most significant shortcomings, struggling to retain facial details while removing shadows, which results in overly smoothed images and loss of critical features. 
In summary, the W-L2 method shows superior performance in face shadow removal, maintaining both structural integrity and visual quality.

\section{Conclusion} \label{Con}
In this paper, we proposed a novel approach to RPCA that addresses the limitations of traditional methods. Our model utilizes a weighted Frobenius norm to represent sparse components,  our model reduces bias and simplifies the optimization process compared to the conventional $\ell_1$-norm. We employed an alternating minimization algorithm, ensuring that each subproblem has an explicit solution, which improves computational efficiency.
Despite its simplicity, numerical experiments demonstrated that our method outperforms existing non-convex regularization techniques, showing improved accuracy and robustness in practical applications.




\bibliographystyle{IEEEtran}

\end{document}